%%%%%%%% ICML 2021 EXAMPLE LATEX SUBMISSION FILE %%%%%%%%%%%%%%%%%

\documentclass{article}

% Recommended, but optional, packages for figures and better typesetting:
\usepackage{microtype,color}
\usepackage{graphicx}
\usepackage{subfigure}
\usepackage{booktabs} % for professional tables
\usepackage{etex}

% hyperref makes hyperlinks in the resulting PDF.
% If your build breaks (sometimes temporarily if a hyperlink spans a page)
% please comment out the following usepackage line and replace
% \usepackage{icml2021} with \usepackage[nohyperref]{icml2021} above.

% Use the following line for the initial blind version submitted for review:
\usepackage[accepted]{icml2021}

% \usepackage{xr}
%\externaldocument{main_supplementary}

% If accepted, instead use the following line for the camera-ready submission:
%\usepackage[accepted]{icml2021}

% The \icmltitle you define below is probably too long as a header.
% Therefore, a short form for the running title is supplied here:

\usepackage{amsmath,amsthm,amssymb,color,multicol,graphicx,float,xcolor,hyperref, stmaryrd,calc, aliascnt, hyperref, cleveref}
%[wide, labelwidth=!, labelindent=0pt,label=(\roman*),noitemsep,nolistsep]
\usepackage[english]{babel}
\usepackage[utf8]{inputenc}
\usepackage[inline]{enumitem}
\usepackage{upgreek}
\usepackage{mathrsfs}
\usepackage{amssymb}
\usepackage[english]{babel}
\usepackage{graphicx}
\usepackage{amsfonts}
\usepackage{mathrsfs}
\usepackage{verbatim}
\usepackage{aliascnt}
\usepackage[disable]{todonotes}
\usepackage{xargs}
\usepackage{cellspace}

\usepackage{dsfont}

%\makeatletter

\crefname{theorem}{theorem}{Theorems}
\Crefname{Theorem}{Theorem}{Theorems}

\newaliascnt{lemma}{theorem}
\newtheorem{lemma}[lemma]{Lemma}
\aliascntresetthe{lemma}
\crefname{lemma}{lemma}{lemmas}
\Crefname{Lemma}{Lemma}{Lemmas}

\newaliascnt{corollary}{theorem}

\aliascntresetthe{corollary}
\crefname{corollary}{corollary}{corollaries}
\Crefname{Corollary}{Corollary}{Corollaries}

\newaliascnt{proposition}{theorem}
\newtheorem{proposition}[proposition]{Proposition}
\aliascntresetthe{proposition}
\crefname{proposition}{proposition}{propositions}
\Crefname{Proposition}{Proposition}{Propositions}

\newaliascnt{definition}{theorem}

\aliascntresetthe{definition}
\crefname{definition}{definition}{definitions}
\Crefname{Definition}{Definition}{Definitions}

\newaliascnt{definitionProposition}{theorem}

\aliascntresetthe{definitionProposition}
\crefname{Proposition and Definition}{Proposition and Definition}{Proposition and Definition}
\Crefname{Proposition and Definition}{Proposition and Definition}{Proposition and Definition}

\newaliascnt{remark}{theorem}

\aliascntresetthe{remark}
\crefname{remark}{remark}{remarks}
\Crefname{Remark}{Remark}{Remarks}

\crefname{example}{example}{examples}
\Crefname{Example}{Example}{Examples}

\crefname{figure}{figure}{figures}
\Crefname{Figure}{Figure}{Figures}

\Crefname{assumption}{\textbf{H}\hspace{-3pt}}{\textbf{H}\hspace{-3pt}}
\crefname{assumption}{\textbf{H}}{\textbf{H}}

%%%%%%%%%%%%%%%
\def\normconst{p_\theta(x)}
\def\hatnormconst{\hat{p}_\theta(x)}

\newcommandx{\propmap}[1 ]{\mathrm{T}_{#1}}
\newcommandx{\chunk}[3]{#1_{#2:#3}}

\def\unnormextendedback{\mathrm{P}_\theta^{\operatorname{un}}}

\def\extendedfor{\mathrm{Q}_\phi}
\newcommandx{\Circ}[2]{\underset{#1}{\overset{#2}{\bigcirc}}}
\newcommandx{\Circtext}[2]{{\bigcirc}_{#1}^{#2}}

\def\densinnovationk{\varphi_{d, K}}
\def\densinnovationkp{\varphi_{d, K+1}}

\def\unnormdistr{\gamma}

\def\borel{\mathcal{B}}
\def\ais{\operatorname{AIS}}
\def\AIS{\ais}
\def\elbo{\mathcal{L}}
\def\elboais{\mathcal{L}_{\AIS}}
\def\elbosmc{\elbo_{\mathrm{SIS}}}
\def\densgauss{\mathrm{g}}

\def\initdistr{q_\phi}

\def\RWM{\operatorname{RWM}}
\def\densinnovation{\varphi_d}

\newcommandx\diag[1]{\operatorname{diag}\left(#1\right)}

\def\Normal{\mathrm{N}}

\def\KL{\mathrm{KL}}
%% mathbf

%%% mathsf

\def\msa{\mathsf{A}}

\def\msu{\mathsf{U}}

%% mathcal

  %%% \mcb est déjà pris
\newcommand{\mcb}[1]{\mathcal{B}(#1)}

%% mathbb

\def\rset{\mathbb{R}}

\def\nsets{\mathbb{N}^*}

%%%% mathrm

\def\rmd{\mathrm{d}}

\def\rmc{\mathrm{C}}
\def\rmC{\mathrm{C}}

\def\Xset{\mathsf{X}}
\def\Xsigma{\mathcal{X}}

\newcommandx{\functionspace}[2][1=+]{\mathbb{F}_{#1}(#2)}
%% argmin, argmax

%\newcommandx{\VarDeux}[3][3=]{\operatorname{Var}^{#3}_{#1}\left\{#2 \right\}}

\newcommand{\1}{\mathds{1}}

\newcommand{\LeftEqNo}{\let\veqno\@@leqno}

%%%% Floating Points Notation

%voc

%order

% Sets
\newcommand{\N}{\ensuremath{\mathbb{N}}}

%\newcommand{\F}{\ensuremath{\mathbb{F}}}

% Operands

\newcommandx{\Vnorm}[2][1=V]{\| #2 \|_{#1}}
\newcommandx{\VnormEq}[2][1=V]{\left\| #2 \right\|_{#1}}
\newcommandx{\norm}[2][1=]{\ifthenelse{\equal{#1}{}}{\left\Vert #2 \right\Vert}{\left\Vert #2 \right\Vert^{#1}}}
\newcommandx{\normLigne}[2][1=]{\ifthenelse{\equal{#1}{}}{\Vert #2 \Vert}{\Vert #2\Vert^{#1}}}

%\newcommand{\defSystem}[1]{\left\lbrace #1 \right. }

% Relations

% Proba

\newcommandx\probaMarkovTilde[2][2=]
{\ifthenelse{\equal{#2}{}}{{\widetilde{\mathbb{P}}_{#1}}}{\widetilde{\mathbb{P}}_{#1}\left[ #2\right]}}

% Landau notation (big O)

% Environments

%\renewenvironment{proof}[1][{\textit{Proof:}}]{\begin{trivlist} \item[\em{\hskip \labelsep #1}]}{\ensuremath{\qed} \end{trivlist}}

%\renewenvironment{proof}[1][{\textit{Proof:}}]{\begin{trivlist} \item[\em{\hskip \labelsep #1}]}{\ensuremath{\qed} \end{trivlist}}

%fleche limite

%notation infini

%notation egale

%plusieurs ligne indice
%\sum\limits_{\substack{i=0 \\ i \neq i_0}}^{n}{A_

\def\ie{\textit{i.e.}}

\def\eqsp{\;}

\newcommand{\ccint}[1]{\left[#1\right]}

\newcommand{\indi}[1]{\1_{#1}}
\newcommandx{\weight}[2][2=n]{\omega_{#1,#2}^N}

\def\lipcon{\mathrm{L}}

\newcommandx\sequence[3][2=,3=]
{\ifthenelse{\equal{#3}{}}{\ensuremath{\{ #1_{#2}\}}}{\ensuremath{\{ #1_{#2}, \eqsp #2 \in #3 \}}}}
\newcommandx\sequenceD[3][2=,3=]
{\ifthenelse{\equal{#3}{}}{\ensuremath{\{ #1_{#2}\}}}{\ensuremath{( #1)_{ #2 \in #3} }}}

\newcommandx{\sequencen}[2][2=n\in\N]{\ensuremath{\{ #1_n, \eqsp #2 \}}}
\newcommandx\sequenceDouble[4][3=,4=]
{\ifthenelse{\equal{#3}{}}{\ensuremath{\{ (#1_{#3},#2_{#3}) \}}}{\ensuremath{\{  (#1_{#3},#2_{#3}), \eqsp #3 \in #4 \}}}}
\newcommandx{\sequencenDouble}[3][3=n\in\N]{\ensuremath{\{ (#1_{n},#2_{n}), \eqsp #3 \}}}

\newcommand{\wrt}{w.r.t.}

\def\iid{\text{i.i.d.}}

\def\eg{e.g.}

\newcommand{\opnorm}[1]{{\left\vert\kern-0.25ex\left\vert\kern-0.25ex\left\vert #1
    \right\vert\kern-0.25ex\right\vert\kern-0.25ex\right\vert}}

\def\Id{\operatorname{Id}}

\newcommandx{\CPE}[3][1=]{{\mathbb E}_{#1}\left[#2 \left \vert #3 \right. \right]} %%%% esperance conditionnelle
\newcommandx{\CPVar}[3][1=]{\mathrm{Var}^{#3}_{#1}\left\{ #2 \right\}}
\newcommand{\CPP}[3][]
{\ifthenelse{\equal{#1}{}}{{\mathbb P}\left(\left. #2 \, \right| #3 \right)}{{\mathbb P}_{#1}\left(\left. #2 \, \right | #3 \right)}}

\newcommandx{\osc}[2][1=]{\mathrm{osc}_{#1}(#2)}

\def\Id{\operatorname{Id}}

\def\w{w}

%%%% bar

%%%% normal

%%%% tilde

%%%%%%%%

\def\Jac{\mathrm{J}}

\newcommand{\ensemble}[2]{\left\{#1\,:\eqsp #2\right\}}

\newcommand{\set}[2]{\ensemble{#1}{#2}}

%%rmd déjà pris

\newcommand\coupling[2]{\Gamma(\mu,\nu)}

\renewcommand{\leq}{\leqslant}

\def\MALA{\mathrm{MALA}}

\def\Ber{\operatorname{Ber}}

\def\stepsize{\eta} 

\icmltitlerunning{Monte Carlo VAE}

\begin{document}

\twocolumn[
\icmltitle{Monte Carlo Variational Auto-Encoders}

%Efficient and expressive VAE with the help of \\
%Sequential Importance Sampling and Markov chain Monte Carlo\textcolor{red}{Arno: title is not catchy.. what about Monte Carlo VAEs}}
%Improve your VAE: \\Combining Sequential Importance Sampling and Markov chain Monte Carlo}
%A Sequential Importance Sampling recipe for efficient Variational Inference}

% It is OKAY to include author information, even for blind
% submissions: the style file will automatically remove it for you
% unless you've provided the [accepted] option to the icml2021
% package.

% List of affiliations: The first argument should be a (short)
% identifier you will use later to specify author affiliations
% Academic affiliations should list Department, University, City, Region, Country
% Industry affiliations should list Company, City, Region, Country

% You can specify symbols, otherwise they are numbered in order.
% Ideally, you should not use this facility. Affiliations will be numbered
% in order of appearance and this is the preferred way.
\icmlsetsymbol{equal}{*}

\begin{icmlauthorlist}
  \icmlauthor{Achille Thin}{ecole}
  \icmlauthor{Nikita Kotelevskii}{sk}
  \icmlauthor{Alain Durmus}{ens} 
  \icmlauthor{Maxim Panov}{sk}
  \icmlauthor{Eric Moulines}{ecole,hse,lagrange}
  \icmlauthor{Arnaud Doucet}{oxford}
\end{icmlauthorlist}

\icmlaffiliation{ecole}{CMAP, Ecole Polytechnique, Universite Paris-Saclay, France}

\icmlaffiliation{hse}{HDI Lab, HSE University, Moscow, Russia}

\icmlaffiliation{sk}{CDISE, Skolkovo Institute of Science and Technology, Moscow, Russia}

\icmlaffiliation{ens}{Ecole Nationale Sup\'erieure Paris-Saclay, France}

\icmlaffiliation{oxford}{University of Oxford}
%My affiliation for this paper is only Oxford

\icmlaffiliation{lagrange}{Centre de recherche Lagrange en mathematiques et calcul}

\icmlcorrespondingauthor{Achille Thin}{achille.thin@polytechnique.edu}

% You may provide any keywords that you
% find helpful for describing your paper; these are used to populate
% the "keywords" metadata in the PDF but will not be shown in the document
\icmlkeywords{Machine Learning, ICML}

\vskip 0.3in
]

% this must go after the closing bracket ] following \twocolumn[ ...

% This command actually creates the footnote in the first column
% listing the affiliations and the copyright notice.
% The command takes one argument, which is text to display at the start of the footnote.
% The \icmlEqualContribution command is standard text for equal contribution.
% Remove it (just {}) if you do not need this facility.

%\printAffiliationsAndNotice{}  % leave blank if no need to mention equal contribution
\printAffiliationsAndNotice{\icmlEqualContribution} % otherwise use the standard text.

\begin{abstract}
  Variational auto-encoders (VAE) are popular deep latent variable models which are trained by maximizing an Evidence Lower Bound (ELBO). To obtain tighter ELBO and hence better variational approximations, it has been proposed to use importance sampling to get a lower variance estimate of the evidence. 
  %In particular, the popular Importance Weighted Auto-Encoder relies on an importance sampling approximation of the evidence. 
  However, importance sampling is known to perform poorly in high dimensions. While it has been suggested many times in the literature to use more sophisticated algorithms such as Annealed Importance Sampling (AIS) and its Sequential Importance Sampling (SIS) extensions, the potential benefits brought by these advanced techniques have never been realized for VAE: the AIS estimate cannot be easily differentiated, while SIS requires the specification of carefully chosen backward Markov kernels.
  %to provide low variance estimates. 
  In this paper, we address both issues and demonstrate the performance of the resulting Monte Carlo VAEs on a variety of applications.
  %Moreover, we show how popular MCMC algorithms can be used for the design of variational families, paving the way to AIS for VAE.
\end{abstract}

\section{Introduction}
  Variational Auto-Encoders (VAE) introduced by~\citep{kingma:welling:2014} are a very popular class of methods in unsupervised learning and generative modelling.  %learn generative models. 
%  Variational Inference (VI) is now an important tool for unsupervised learning, generative models, and in high dimensional sampling.    It circumvent some important issueIt differs from classical Markov chain Monte Carlo (MCMC), which aims at creating a chain of $N$ samples, which will be arbitrarily close to a target distribution as $N \rightarrow \infty$ under mild conditions. However, MCMC suffers from some limitations. For example, the chain must reach an equilibrium to be close to the target distribution, which is typically hard to assess, especially in high dimensions. 
  %VI, on the other hand, scales well with dimension, as one can easily draw samples from the variational distribution. However, it introduces a bias of approximation of the target by the variational distribution.
  %In amortized VI, \citep{kingma:welling:2014}, such an approximation error can impede on performance results.
  These methods aim at finding a parameter $\theta$ maximizing the marginal log-likelihood $p_\theta(x) = \int p_\theta(x,z)\rmd z$ where $x\in\rset^N$ is the observation and $z\in\rset^d$ is the latent variable. They rely on the introduction of an additional parameter $\phi$ and a family  of variational distributions $q_\phi(z| x)$. The joint parameters $\{\theta, \phi\}$  are then inferred through the optimization of the Evidence Lower Bound (ELBO) defined as
  \begin{multline}
  \nonumber
    \elbo(\theta, \phi)= %\PE_{q_\phi}\left[\log\left(\frac{p_\theta(x,z)}{q_\phi(z| x)}\right)\right] = 
    \int \log\left(\frac{p_\theta(x, z)}{q_\phi(z | x)}\right) q_\phi(z | x) \rmd z\eqsp\\
     =\log p_\theta(x) - \KL\bigl(q_\phi(z | x) ~\|~ p_\theta(z | x)\bigr) \leq \log p_\theta(x) \eqsp.
  \end{multline}
  The design of expressive variational families has been the topic of many works and is a core ingredient in the efficiency of VAE~\cite{rezende2015variational,kingma2016improving}.
  Another line of research consists in using positive unbiased estimators $\hat{p}_\theta(x)$ of the loglikelihood $p_\theta(x)$ for $q_{\phi}$, \ie~$\mathbb{E}_{q_\phi}[\hat{p}_\theta(x)] = p_\theta(x)$. Indeed, as noted in~\cite{mnih2016variational}, it follows from Jensen's inequality that 
  \begin{align}\label{eq:generalELBO}
    \elbo(\theta, \phi) &= \mathbb{E}_{q_\phi}[\log \hat{p}_\theta(x)]\leq \log p_\theta(x)\eqsp.
  \end{align}
  A Taylor expansion shows that
  \begin{align*}
    \elbo(\theta, \phi) \approx \log p_\theta(x) - \frac{1}{2}\textup{var}_{q_\phi} \left[\frac{\hat{p}_\theta(x)}{p_\theta(x)}\right] \eqsp;
  \end{align*}
  see e.g. \cite{maddison2017filtering,domke2018importance} for formal results. Hence the ELBO becomes tighter as the variance of the estimator decreases.

%\paragraph{Related works.}
  A common method to obtain an unbiased estimate is built on importance sampling; \ie~$\hatnormconst = n^{-1}\sum_{i=1}^n [p_\theta(x,z_i)/\initdistr(z_i| x)]$ for $z_i\overset{\textup{i.i.d.}}{\sim} q_{\phi}(\cdot|x)$. In particular, combined with~\eqref{eq:generalELBO}, we obtain the popular Importance Weighted Auto Encoder (IWAE) proposed by~\cite{burda:grosse:2015}. However, it is expected that the relative variance of this importance-sampling based estimator typically increases with the dimension of the latent $z$. To circumvent this issue, we suggest in this paper to consider other estimates of the evidence which have shown great success in the Monte Carlo literature.
  %Unbiased estimates of the evidence whose variance scale better dimension have been proposed in the Monte Carlo literature.
  In particular, Annealed Importance Sampling (AIS)~\cite{neal2001annealed,wu:burda:grosse:2016}, and its Sequential Importance Sampling (SIS) extensions~\cite{Del-Moral:2006} define state-of-the-art estimators of the evidence. These algorithms rely on an extended target distribution for which an efficient importance distribution can be defined using non-homogeneous Markov kernels.
 
   %is built based on build an efficient proposal sequentially by simulating a non-homogeneous Markov chain and performing importance sampling in path space with respect to an augmented target distribution which admits a marginal coinciding with the target.

    %%% Most efficient way : annealed importance sampling...
  %   Some effort to consider efficient evaluation methods has been made by~\cite{wu:burda:grosse:2016}, using in particular Annealed Importance Sampling (AIS). AIS~\cite{neal2001annealed}, and in general Sequential Importance Sampling \cite{Del-Moral:2006} are Sequential Monte Carlo inspired methods that provide an efficient low variance estimator of a normalizing constant.
    %%%Why people can't do it
  %   However, while AIS and SIS are promising methods for evaluating normalizing constants~\cite{wu:burda:grosse:2016},
    %A natural objective, at that point, would be to learn directly a Variational Auto Encoder with a very efficient way to compute a normalizing constant, that is Annealed Importance Sampling (in particular,  compare AIS and IWAE bounds on log-likelihood, proving effectively the relevance of an AIS based VAE).
  %Unfortunately, it has proven difficult to develop VAE relying on these two methods. 
  %As AIS is popular in the machine learning community~\cite{wu:burda:grosse:2016}, 
  It has been suggested in various contributions that AIS could be useful to train VAE~\cite{salimans:kingma:welling:2015,wu:burda:grosse:2016,maddison2017filtering,wunoe2020stochastic}. However, to the authors knowledge, no contribution discusses how an unbiased gradient of the resulting ELBO could be obtained. Indeed, the main difficulty in this computation arises from the MCMC transitions in AIS. 
  %rely on Metropolis-Hastings (MH) steps, hence they do not admit a density w.r.t. Lebesgue measure.
  As a result, when this estimator is used, alternatives to the ELBO~\eqref{eq:generalELBO} have often been considered, see e.g.~\cite{ding2019learning}.

  % \textcolor{red}{``Compared to AIS, SIS can exploit MCMC transitions which admit a transition density~\cite{Del-Moral:2006,salimans:kingma:welling:2015}, \eg~the unadjusted Langevin kernels'' Arno: No!! the whole point is that you can exploit kernels which are only approximate MCMC, AIS can exploit kernels admitting a density... please dont add mistakes!}
  Whereas AIS requires using MCMC transition kernels, SIS is more flexible and can exploit Markov transition kernels which are only approximately invariant w.r.t. to a given target distribution, \eg~unadjusted Langevin kernels. In this case, the construction of the augmented target distribution, which is at the core of the estimator, requires the careful selection of a class of auxiliary `backward' Markov kernels. \citep{salimans:kingma:welling:2015,ranganath2016hierarchical, maaloe2016auxiliary,goyal2017variational,huang:tan:lacoste:courville:2018} propose to learn such auxiliary kernels parameterized with neural networks through the ELBO. However, as illustrated in our simulations, this comes at an increase in the overall computational cost.
  %and the introduction of heavily parameterized inference functions in order to guarantee a low variance estimate in the end. 
  %In this work, we consider auxiliary kernels designed with appropriate Markov transitions. In particular, when using Unadjusted Langevin kernels, this results in ...
  %this results in an SIS estimate converging to the AIS estimate when the stepsize of the Unadjusted Langevin Algorithm goes to zero.\textcolor{red}{here it'd be great to write some kind of formal results, or at least with hands}
  %\textcolor{red}{actually we should have numerical results showing what we're doing provide better results than having to parameterize the backward}

\paragraph{Our contributions.}
  %%% Introduce our claims
  The contributions of this paper are as follows:
  %\begin{itemize}[topsep=0pt,itemsep=0pt,partopsep=1pt,parsep=1pt,listparindent=2pt,leftmargin=12pt]
  \begin{enumerate}[label=\textbf{(\roman*)}]
  %begin{itemize}
    \item %\textbf{Claim 1:} \textcolor{red}{MP: Is claim a good word here? Probably better contribution or jus no word.} 
    We show how to obtain an SIS-based ELBO relying on undadjusted Langevin dynamics which, contrary to~\citep{salimans:kingma:welling:2015,goyal2017variational,huang:tan:lacoste:courville:2018}, does not require introducing and optimizing backward Markov kernels. In addition, an unbiased gradient estimate of the resulting ELBO which exploits the reparameterization trick is derived.
    \item %\textbf{Claim 2:} 
    We propose an unbiased gradient estimate of the ELBO computed based on AIS. At the core of this estimate is a non-standard representation of Metropolis--Hastings type kernels which allows us to differentiate them. This is combined to a variance reduction technique for improved efficiency.
    \item %\textbf{Claim 3:} 
    We apply these new methods to build novel Monte Carlo VAEs, and show their efficiency on real-world datasets.
  \end{enumerate}

All the theoretical results are detailed in the supplementary material.
%\section{Differentiable SIS-based ELBO}
\section{Variational Inference via Sequential Importance Sampling}%SIS-based ELBO}
\label{sec:diffSIS}
  %We focus in the following on deriving efficient and differentiable estimators of the likelihood $p_\theta(x)$, based in particular on Sequential Monte Carlo ideas.
\subsection{SIS estimator}
\label{subsec:sis_est}
  %Sequential Importance Sampling (SIS) aims at estimating the normalizing constant $\normconst$ of an unnormalized target density  $\unnormdistr$ on $\rset^{d_z}=\rset^d$.
  %Suppose we have access to a distribution $\initdistr$ easy to sample from and such that $\initdistr(z) > 0$ whenever $\unnormdistr(z) > 0$.  
%  An importance sampling estimator of $\hatnormconst$ is obtained by sampling independently $z^{1}, \dots, z^{n} \sim \initdistr(\cdot| x)$, and compute
 % \begin{equation}
 % \label{eq:iS-estimator-Z}
 % \hatnormconst = \frac{1}{n}\sum_{i=1}^n w(z^{i})\eqsp,~\text{where~~}w(z)= \frac{p_\theta(x,z)}{\initdistr(z| x)}  \end{equation}
 % is the importance weighting function. % It is well-known that $\hatnormconst$ is an unbiased estimator of $Z$ provided . 
 %As mentioned earlier, this is the core idea of the IWAE~\cite{burda:grosse:2015}.
  %However, in general in importance sampling, this estimator might have a very large variance if the importance distribution $\initdistr$ is not chosen carefully.
  %In Annealed Importance Sampling (AIS), we rather consider a tempering scheme $(\beta_k)_{k=0}^{K}$, with $0=\beta_0<\beta_1\leq\dots\leq\beta_{K}=1$, and define a sequence of annealed unnormalized target densities $\unnormdistr_k = m^{1-\beta_k} \unnormdistr^{\beta_k }$. With this convention $\unnormdistr_K= \unnormdistr$.
  The design of efficient proposal importance distributions has been proposed in~\cite{crooks1998nonequilibrium,neal2001annealed} starting from an annealing schedule, and was later extended by~\cite{Del-Moral:2006}.
%We review here the annealing strategy introduced in the literature to design a good proposal importance distribution~\cite{crooks1998nonequilibrium,neal2001annealed}, and its extension proposed in~\cite{Del-Moral:2006}.
  %To introduce a good proposal importance distribution, it has been suggested in the literature to use an annealing  strategy.
 % We consider 
  %which will exploited here: 
  Let $\{\unnormdistr_k\}_{k = 0}^K$ be a sequence of unnormalized ``bridge'' densities satisfying $\unnormdistr_0(z) = \initdistr(z| x)$, $\unnormdistr_K(z)= p_\theta(x, z)$. Note that $\unnormdistr_K$ is not normalized, in contrast to  $\unnormdistr_0$.
  Here we set 
  \begin{equation}
      \unnormdistr_{k}(z)  = q_\phi(z | x)^{1 - \beta_k} p_\theta(x, z)^{\beta_k}
  \end{equation}
  for an annealing schedule $0 = \beta_0 < \dots < \beta_{K} = 1$, but alternative sequences of intermediate densities could be considered.
  At each iteration, SIS samples a non-homogeneous Markov chain use the transition kernels  $\{ M_k \}_{k=1}^K$. In this section, we assume that $M_k$ admits a positive transition density $ m_k$, such that $m_k$  leaves $\unnormdistr_k$ invariant, \ie~ $\int\unnormdistr_k(z)m_k(z, z')\rmd z = \unnormdistr_k(z')$, or is only approximately invariant. In particular, $m_k$ typically depends on the data $x$. However, to simplify notation, this dependence is omitted. 
  
  Based on this sequence of transition densities, SIS considers the joint density on the path space $\chunk{z}{0}{K} \in \rset^{d{(K + 1)}}$
  \begin{equation}\label{eq:extendedproposal}
    \initdistr^K(\chunk{z}{0}{K} | x) = \initdistr(z_0 | x) \prod_{k=1}^K m_k(z_{k-1}, z_k)\eqsp,
  \end{equation}
  where $\chunk{z}{i}{j} = (z_i, \dots, z_j)$ for $0 \leq i \leq j$. By construction, we expect the $K$-th marginal $\initdistr^K(z_K| x)= \int\initdistr^K(\chunk{z}{0}{K}| x)\rmd \chunk{z}{0}{K-1}$ to be close to $p_\theta(z_K|x)$. However, we cannot use importance sampling to correct for the discrepancy between these two densities, as $\initdistr^K(z_K| x)$ is typically intractable. To bypass this problem, we introduce another density on the path space %$\chunk{z}{0}{K} \in \rset^{d{(K + 1)}}$, defined by
  \begin{equation}\label{eq:extendedtarget}
    p_\theta^K(x, \chunk{z}{0}{K}) = p_\theta(x, z_K) \prod_{k=K-1}^0 \ell_{k}(z_{k+1},z_{k})\eqsp,
  \end{equation}
  where $\{\ell_k\}_{k = 0}^{K-1}$ is a sequence of auxiliary positive transition densities. Note that in this case, the $K$-th marginal  $ \int p_\theta^K(x, \chunk{z}{0}{K}) \rmd \chunk{z}{0}{K-1}$ is exactly  $p_\theta(x, z_K) $. 
  %Moreover, as the transition densities are normalized, we have $\int p_\theta^K(x, \chunk{z}{0}{K}) \rmd \chunk{z}{0}{K} = \normconst$.
  Using now importance sampling on the path space, we obtain the following unbiased SIS estimator~\cite{Del-Moral:2006,salimans:kingma:welling:2015} by sampling independently $\chunk{z}{0}{K}^i \sim \initdistr^K(\cdot | x)$ and computing
  \begin{equation}
  \label{eq:SiS-estimator-Z}
    \hatnormconst = \frac{1}{n}\sum_{i = 1}^n w^K(\chunk{z}{0}{K}^{i})\eqsp,\eqsp w^K(z_{0:K})=\frac{p_\theta^K(x, \chunk{z}{0}{K}) }{\initdistr^K(\chunk{z}{0}{K}| x)}.
  \end{equation}
\subsection{AIS estimator}
\label{subsec:ais_est}
  The selection of the kernel $\{\ell_k\}_{k = 0}^{K-1}$ has a large impact on the variance of the estimator. The optimal reverse kernels $\ell_k$ minimizing the variance of $\hatnormconst$ are given by 
  \begin{equation}
  \label{eq:optimalbackward}
    \ell_{k-1}(z_k, z_{k-1})=\frac{q_{\phi, k-1}(z_{k-1}\vert x)m_k(z_{k-1}, z_k)}{q_{\phi,k}(z_k\vert x)}\eqsp,
  \end{equation}
  where $q_{\phi,k}(z_k| x)= \int q_\phi^K(\chunk{z}{0}{K}|x)\rmd \chunk{z}{0}{K}^{-k}$ is the $k$-th marginal of $\initdistr^K$; see~\cite{Del-Moral:2006}. However, the resulting estimator $\hatnormconst $ is usually intractable. %In~\cite{salimans:kingma:welling:2015, wolf:karl:2016,,huang:tan:lacoste:courville:2018}, it has been proposed to parameterize a neural network for the densities $\ell_{k-1}$ to be learnt. This introduces flexibility in the model, but can be quite computationally expensive. %This has also generally been considered in~\cite{}, but again, the transitions are heavily parameterized. 
   An approximation to~\eqref{eq:optimalbackward} leading to a tractable estimator is provided by AIS~\cite{crooks1998nonequilibrium,neal2001annealed}. When $m_k$ is $\unnormdistr_k$-invariant, then, by assuming that $q_{\phi,k} \approx q_{\phi, k-1} \approx \gamma_k$ in~\eqref{eq:optimalbackward}, we obtain
   %transition densities $m_{k}$ which are $\unnormdistr_k$-invariant, %\ie~$\unnormdistr_k(z_{k-1}) m_k(z_{k-1}, z_k)= \unnormdistr_k(z_{k}) m_k(z_{k}, z_{k-1})$. 
   %By ,
  \begin{equation}
  \label{eq:reversal}
    \ell_{k-1}(z_k, z_{k-1})=\frac{ \unnormdistr_k(z_{k-1}) m_k(z_{k-1}, z_k)}{\unnormdistr_k(z_{k})}\eqsp.
  \end{equation}
 We refer to this kernel as the \emph{reversal} of $m_k$. In particular, if $m_k$ is $\unnormdistr_k$-reversible,
 \ie~$\unnormdistr_k(z_{k-1}) m_k(z_{k-1}, z_k)= \unnormdistr_k(z_{k}) m_k(z_{k}, z_{k-1})$, then $\ell_{k-1}=m_k$. 
 Note that the weights~\eqref{eq:SiS-estimator-Z} can be computed using the decomposition $w^K(z_{0:K})=\prod_{k = 1}^K w_{k} (z_{k-1}, z_k)$ with 
  \begin{equation}\label{eq:incweight}
    w_{k} (z_{k-1}, z_k)=\frac{\unnormdistr_k(z_k) \ell_{k-1}(z_{k}, z_{k-1})}{\unnormdistr_{k-1}( z_{k-1})m_k(z_{k-1}, z_k)} \eqsp,
  \end{equation}
 which simplifies when using~\eqref{eq:reversal} as
  \begin{equation}
  \label{eq:AISweight}
    w_{k} (z_{k-1}, z_k) = \unnormdistr_k(z_{k - 1}) / \unnormdistr_{k-1}(z_{k - 1})\eqsp.
  \end{equation}
 In contrast to previous works, we will consider in the next section transition kernels $m_k$ which are only approximately $\unnormdistr_k$-invariant but still build $\ell_{k-1}$ in the spirit of~\eqref{eq:reversal}.
 %on the expression~\eqref{eq:reversal} to suggest approximations of $\ell_{k - 1}$. In particular, we select in the following transition kernels $m_{k}$ which are only approximately $\unnormdistr_k$-invariant and exploit related ideas to construct a kernel $\ell_{k-1}$ in this context.

\subsection{SIS-ELBO using unadjusted Langevin}
\label{subsec:LangevinELBO}
%Assume that $\unnormdistr_k$ is continuously differentiable. 
For any $k$, we consider the Langevin dynamics associated to $\unnormdistr_k$ and the corresponding Euler-Maruyama discretization.
%defines a stochastic differential equation which is reversible \wrt~$p_\theta$ and leaves it invariant. 
%A usual way to construct an efficient sampling scheme based on this Langevin dynamics is to consider the Euler-Maruyama discretization of the dynamics, which 
Then, we choose for $m_k$ the transition density associated to this discretization 
\begin{equation}
    \label{eq:langevin_density}
    m_k(z_{k - 1}, z_k) = \Normal(z_k; z_{k - 1} + \stepsize\nabla\log\unnormdistr_k(z_{k - 1}), 2\stepsize \Id )\eqsp.
\end{equation}
Note that sampling $\initdistr^K$ boils down to sampling $Z_0\sim \initdistr$ and defining the Markov chain $\{Z_k\}_{k \leq K}$ recursively by
  \begin{equation}
  \label{eq:langevin_eq}
    Z_{k} = Z_{k - 1} + \stepsize \nabla\log\unnormdistr_k(Z_{k - 1}) + \sqrt{2 \stepsize} U_{k}\eqsp,
  \end{equation}
  where $U_{k} \sim \Normal(0, \Id_d)$. 
  %and $\densinnovation$ denotes the density of the $d$-dimensional standard normal distribution. 
 %Then, for any $k$, we consider for $m_k$ associated with one step of Langevin, that is $m_k(z_{k - 1}, z_k) = \Normal(z_k; z_{k - 1} + \stepsize\nabla\log\unnormdistr_k(z_{k - 1}), 2\stepsize \Id )$. 
  Moreover, as the continuous Langevin dynamics is reversible \wrt\ $\unnormdistr_k$, this suggests that we can approximate the reversal $\ell_{k-1}$ of $m_k$ by $m_k$ directly as in AIS and thus select for any $k$, $\ell_{k-1}=m_k$ as done in ~\cite{heng2017controlled}. However, in this case, the weights $w_{k}(z_{k-1},z_k)$ do not simplify to $\unnormdistr_k(z_{k-1})/\unnormdistr_{k-1}(z_{k-1})$ as $m_k$ is not exactly $\unnormdistr_k$-invariant and we need to rely on the general expression given in~\eqref{eq:incweight}. This approach was concurrently and independently proposed in \cite{wunoe2020stochastic} but they do not discuss gradient computations therein.
  
  Based on~\eqref{eq:generalELBO} and~\eqref{eq:SiS-estimator-Z}, we introduce
  %{\small
  \begin{equation}
  \label{eq:elbo_sis_1}
    \elbosmc =   \int_{} \log \left(\prod_{k = 1}^K w_{k} (z_{k - 1}, z_k)\right) \initdistr^K(\chunk{z}{0}{K}| x) \rmd \chunk{z}{0}{K}\eqsp.
    \end{equation}
  We consider a reparameterization of~\eqref{eq:elbo_sis_1} based on the Langevin mappings associated with target $\unnormdistr_k$ given by 
  \begin{multline}
  \label{eq:langevin_map}
    \propmap{k, u}(z_{k - 1}) = z_{k - 1} + \stepsize \nabla\log\unnormdistr_k(z_{k - 1}) + \sqrt{2 \stepsize} u\eqsp.
  \end{multline}
  %
  %$\propmap{k,u}$ is defined such that in~\eqref{eq:langevin_eq}.
  %Note, for transformations $\{T_j\}_{j=i}^K$, $\Circ{j=i}{K} = T_K \circ \dots \circ T_i$. Write by convention as well for a transformation $T$, $T^1 = T$ and $T^0 = \Id$.
  An easy change of variable based on the identity $Z_k = \propmap{k, U_k}(Z_{k-1})$ in~\eqref{eq:langevin_eq} shows that $\elbosmc$ can be written as
  \[
  \int_{} \log\left( \prod_{k=1}^{K}  w_{k}(z_{k-1},z_k)\right)\initdistr(z_0| x) \densinnovationk(\chunk{u}{1}{K})\rmd z_0\rmd \chunk{u}{1}{K},
  \]
  where $\densinnovation$ stands for the density of the $d$-dimensional standard Gaussian distribution, $\densinnovationk(\chunk{u}{1}{K}) = \prod_{i=1}^K \densinnovation(u_i)$,  and we write $z_k = \propmap{k, u_k}\circ\dots\circ \propmap{1, u_1}(z_0) = \Circtext{i = 1}{k} \propmap{i, u_i} (z_0)$. 
  By~\eqref{eq:SiS-estimator-Z} and as $\ell_{k-1}=m_k$, our objective thus finally writes
  \begin{align}
    \label{eq:elbo_sis}
    \elbosmc&(\theta, \phi; x) = \int q_\phi(z_0| x) \densinnovationk(\chunk{u}{1}{K}) \\
    \nonumber
   \times &\log\left(\frac{p_\theta(x, z_K)\prod_{k=1}^K m_k(z_k, z_{k-1})}{q_\phi(z_0 | x) \prod_{k = 1}^K m_k(z_{k - 1}, z_k)}\right) \rmd z_0 \rmd \chunk{u}{1}{K}\eqsp.
  \end{align}
  This defines the reparameterizable Langevin Monte Carlo VAE (L-MCVAE). The algorithm to obtain an unbiased SIS estimate of $p_\theta(x)$ is described in \Cref{alg:differentiable_sis_flows}. This estimate is related to the one presented in~\cite{caterini:doucet:2018}, however this work builds on a deterministic dynamics which limits the expressivity of the induced variational approximation. In contrast, we rely here on a stochastic dynamics. While we limit ourselves here to undajusted (overdamped) Langevin dynamics, this could also be easily extended to unajusted underdamped Langevin dynamics \cite{monmarche2020high}.
\begin{algorithm}[!t]
    \caption{Langevin Monte Carlo VAE}
    \label{alg:differentiable_sis_flows}
    \begin{algorithmic}
      \STATE {\bfseries Input:}  Number of steps $K$, 
      %proposal mappings $\{\propmap{k, u}\}_{k \leq K, u\in\msu}$, 
      initial distribution $\initdistr$, unnormalized target distribution $p_\theta$, step-size $\eta$,  annealing schedule $\{\beta_k\}_{k = 0}^{K}$.
       \STATE {\bfseries Output:} SIS estimator $W$ of $\log p_\theta(x)$.
        \\\hrulefill
        \STATE Draw $z_0\sim \initdistr(\cdot | x)$;
        \STATE Set $W = -\log \initdistr(z_0 | x)$;
      \FOR{$k=1$ {\bfseries to} $K$}
      \STATE Draw $u_k\sim \densinnovation$;
      \STATE Set $\unnormdistr_k(\cdot) = \beta_k \log p_\theta (x, \cdot) + (1-\beta_k)\log \initdistr(\cdot| x) $;
      \STATE Set $z_k = z_{k - 1} + \stepsize \nabla \log \unnormdistr_k(z_k)  + \sqrt{2\stepsize}u_k$;
      \STATE Set $W= W + \log m_k(z_{k}, z_{k - 1}) - \log m_k(z_{k - 1}, z_k)$;
      \ENDFOR
      \STATE Set $W= W + \log p_\theta (x, z_K)$;
      \STATE Return $W$
    \end{algorithmic}
  \end{algorithm}

 \begin{comment}
  The main flaw of this approach is that we only consider kernels which are approximately reversible w.r.t. $\unnormdistr_k$ (due to the discretization error of~\eqref{eq:langevin_eq}). To obtain kernels invariant w.r.t. $\unnormdistr_k$, we must introduce a Metropolis-Hastings correction step~\cite{besag:1994}. However, this implies that the resulting Markov kernel does not have a density \wrt~the Lebesgue measure. We propose in the next section a representation which allows us to differentiate through a Markov kernel, even with a Metropolis-Hastings step. To that purpose, we rely on measure-theory background in order to ensure the validity of our approach. The related notation is only present in the next section, and the non-interested reader might directly go to \Cref{sec:experiments} for the application of the method above.
\end{comment}
  \begin{comment}
  When using ULA for the SIS estimator, we can write explicitly the expression of the difference 
  {\small\begin{multline} 
  \sqrt{2\stepsize} \log\left( l_{k-1}(z_{k}, z_{k-1}) / m_k(z_{k-1}, z_k)\right) =\\ 
   \|z_k - z_{k-1} - \stepsize\nabla\log\unnormdistr_k(z_{k-1}) \|^2 - \|z_{k-1} - z_{k} - \stepsize\nabla\log\unnormdistr_k(z_{k}) \|^2 
  \end{multline}}
  \end{comment}
  \section{Variational Inference via Annealed Importance Sampling}
  \label{sec:ais}
In Section \ref{subsec:LangevinELBO}, we derived the SIS estimate of the evidence computed using ULA kernels $M_k$, whose invariant distribution approximates $\unnormdistr_k$. We can differentiate the resulting ELBO and exploit the reparametrization trick as these kernels admit a density $m_k$  w.r.t. Lebesgue measure. When computing the AIS estimates, we need Markov kernels $M_k$ that are $\unnormdistr_k$-invariant. Such Markov kernels  $M_k$ most often rely on a Metropolis--Hastings (MH) step and therefore do not typically admit a transition density.  While this does not invalidate the expression of the AIS estimate presented earlier, it complicates significantly the computation of an unbiased gradient estimate of the corresponding ELBO. In this section, we propose a way to compute an unbiased estimator of the ELBO gradient for MH Markov kernels. We use here elementary measure-theoretical notations  which are recalled in \Cref{sec:notations-supp}

\subsection{Differentiating Markov kernels}
\paragraph{Markov kernel.} Let $\borel(\rset^{d})$ denote the Borelian $\sigma$-field associated to $\rset^d$. A Markov kernel $M$ is a function defined on $\rset^{d} \times \borel( \rset^{d})$,
such that for any $z \in \rset^{d}$, $M(z, \cdot)$ is a probability distribution on $\borel(\rset^{d})$, \ie\ $M(z, A)$ is the probability starting from $z$ to hit the set $A\subset\rset^d$.
  %A variety of Markov kernels can be defined. 
 The simplest is ``deterministic'', in which case  $Q(z,A)= \updelta_{\propmap{}(z)}(A)$, where $\propmap{}$ is a measurable mapping on $\rset^{d}$ and $\updelta_y$ is the Dirac mass at $y$. Instead of a single mapping $\propmap{}$, we can consider a family of ``indexed'' mappings $\set{\propmap{u}}{u \in \rset^{d_u}}$. If $\densgauss$ is a p.d.f on $\rset^{d_U}$, we consider $$M(z,A)= \int \indi{A}(\propmap{u}(z)) \densgauss(u) \rmd u\eqsp.$$ 
  %Sampling from this Markov kernel is easy: 
    To sample $M(z,\cdot)$, we first sample $u \sim \densgauss$ and then apply the mapping $\propmap{u}(z)$. If $d=d_u$, we consider the function $G_z: \rset^d \mapsto \rset^d$ defined for all $z \in \rset^d$ by 
  \begin{equation}
  \label{eq:definition-G-z}
      G_z: u \mapsto G_z(u) =\propmap{u}(z)
  \end{equation} 
  If $G_z$ is a diffeomorphism, then by applying the change of variables formula, we obtain
  \begin{align}
  \label{eq:markov-kernel-transition}
    M(z, A) &= \textstyle{\int} \indi{A}\bigl(G_z(u)\bigr) \densgauss(u) \rmd u \\ \nonumber
    &= \textstyle{\int} \indi{A}(z') m(z, z') \rmd z' \eqsp,
  \end{align}
  %where $\Jac$ denoting the Jacobian.
  where, denoting $\Jac_{G_z^{-1}}(z')$ is the absolute value of the Jacobian determinant of $G_z^{-1}$ evaluated at $z'$, we set 
  \begin{equation}
  \label{eq:density_proposal}
      m(z, z')= \Jac_{G_z^{-1}}(z') \densgauss\bigl(G_z^{-1}(z')\bigr)\eqsp. 
  \end{equation}
  In this case, the Markov kernel has a transition density $m(z, z')$. This is the setting considered in the previous section.
  %It is worthwhile to note that all Markov kernels with a transition density  are of the form~\eqref{eq:markov-kernel-transition}. 
  
  Finally, some Markov kernels have both a  deterministic and a continuous component. This is for example the case of Metropolis--Hastings (MH) kernels:
  \begin{align}
  \label{eq:mh_kernel}
    &M(z, \msa) = \int_{\msu}  Q_u(z, \msa) \densgauss(u) \rmd u \eqsp,  \text{ where } \\ \nonumber
    &Q_u\bigl(z, \msa\bigr) = \alpha_u(z) \updelta_{\propmap{u}(z)}(\msa) + \bigl\{1 - \alpha_u(z)\bigr\} \updelta_z(\msa) \eqsp,
  \end{align}
  with $\alpha_u(z) = \alpha\bigl(z, \propmap{u}(z)\bigr)$ is the \emph{acceptance function} and $\propmap{u}\colon \rset^d \to \rset^d$ is the \textit{proposal mapping}. In the sequel, we denote $\alpha_u^1(z) = \alpha_u(z)$ and $\alpha^0_u(z) = 1 - \alpha_u(z)$, and set $\propmap{u}^0(z) = z$. With these notations, \eqref{eq:mh_kernel} can be rewritten in a more concise way  
  \begin{equation}
   \label{eq:Q_mh_kernel-concise}
      Q_u(z, \msa) = \sum_{a=0}^1 \alpha^a_u(z) \updelta_{\propmap{u}^a(z)}(\msa) \eqsp.
  \end{equation}  
  To sample $M(z,\cdot)$, we first draw $u \sim \densgauss$ and then compute the proposal $y = \propmap{u}(z)$. With probability $\alpha_u(z)$, the next state of the chain is set to $z'=y$, and $z' = z$ otherwise. If $G_z$ defined in \eqref{eq:definition-G-z} is a diffeomorphism, then the Metropolis--Hasting kernel may be expressed as  
 \begin{multline}
  \nonumber
    M(z,A) = \int \alpha(z,z') m(z,z') \indi{A}(z') \rmd z' \\+ \bigl(1 - \bar{\alpha}(z)\bigr) \delta_z(A),
  \end{multline}
  where   $\bar{\alpha}(z) = \int \alpha(z, z') m(z, z') \rmd z'$ is the mean acceptance probability at $z$ (the probability of accepting a move) and $m(z,z')$ is defined as in \eqref{eq:density_proposal}.
  In MH-algorithms, the acceptance function
  $\alpha\colon \rset^{2d} \to \ccint{0, 1}$ is chosen so that $M$ is $\pi$-reversible $\pi(\rmd z) M(z, \rmd z')= \pi(\rmd z')M(z', \rmd z)$, where $\pi$ is the target distribution. This implies, in particular, that  $M$ is  $\pi$-invariant. Standard MH algorithms use $\alpha(z,z')= 1 \wedge \pi(z') m(z',z) / \pi(z) m(z,z')$; see \cite{tierney:1994}. 
  %

  %However, in the case above where we do not necessarily want to satisfy a reversibility property, we set for any $u\in\msu$, $\alpha_u\equiv 1$.
  %We write in general, in the following, these functions $\alpha_u$, which can be set to $1$ at any point if we do not consider reversible kernels.
  To illustrate these  definitions and constructions, consider first the symmetric Random Walk Metropolis Algorithm (RWM). In this case, $d_U =d$ and $\densgauss \leftarrow \densinnovation$, where $\densinnovation$ is the $d$-dimensional  standard Gaussian density. The proposal mapping is given by
  \begin{equation}
  \nonumber
    G_z^{\RWM}(u)= \propmap{u}^{\RWM}(z) = z + \Sigma^{1/2} u \eqsp, 
  \end{equation}
  where $\Sigma$ is a positive definite matrix, and the acceptance function is given by $\alpha^{\RWM}_{ u}(z) = 1 \wedge \bigl(\pi\bigl(T^{\RWM}_{ u}(z)\bigr)/\pi(z)\bigr)$.

  Consider now the Metropolis Adjusted Langevin Algorithm (MALA); see~\cite{besag:1994}. Assume that $z \mapsto \log \pi(z)$ is differentiable and denote by $\nabla \log \pi(z)$ its gradient. The Langevin proposal mapping $\propmap{u}^{\MALA}$  %associated to the target $\pi$ 
  is defined by 
  \begin{equation}
      \label{eq:langevin_prop}
      G_z^{\MALA} (u) = \propmap{u}^{\MALA}(z) = z + \eta \nabla\log\pi(z) + \sqrt{2\eta} u\eqsp.
  \end{equation}
 We set $\densgauss\leftarrow \densinnovation$ %where $\densinnovation$ is the pdf of the standard Gaussian distribution
  and $ \alpha^{\MALA}_{ u}(z)$ is the acceptance given by
  \begin{equation}
    \label{eq:mala_acceptance}
    \alpha^{\MALA}_{ u}(z) = 1 \wedge \frac{\pi\bigl(\propmap{u}^{\MALA} (z)\bigr) m_\eta\bigl(\propmap{u}^{\MALA} (z), z\bigr)} {\pi(z) m_\eta\bigl(z,\propmap{u}^{\MALA} (z)\bigr)}  \eqsp,
  \end{equation}
  where $m_\stepsize(z,z')= \stepsize^{-1/2} \densinnovation\bigl(\stepsize^{-1/2}\{z' -\propmap{0}^{\MALA}(z)\}\bigr)$, similarly to~\eqref{eq:langevin_density}.%$g_\stepsize(z_1,z_2)= \stepsize^{-1/2} \densinnovation\bigl(\stepsize^{-1/2}(z_2-\propmap{u}(z_1))\bigr)$ is the proposal kernel density. 
\begin{lemma}
\label{lem:langevin_map_diff}
For all $z\in\rset^d$, $G_z^{\MALA}$ is a $\rmc^1$-diffeomorphism. 
Moreover, assume that $\log\pi$ is $\lipcon$-smooth with $\lipcon>0$, \ie\ for $z,z'\in\rset^d$, $\|\nabla \log\pi(z') - \nabla\log\pi(z) \leq \lipcon \|z'-z\|$. Then, if $0\leq \eta<L^{-1}$, for all $u\in\rset^d$, $\propmap{u}^{\MALA}$ defined in \eqref{eq:langevin_prop} is a $\rmc^1$-diffeomorphism. 
\end{lemma}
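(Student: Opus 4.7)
The plan is to handle the two claims of the lemma separately. For the first, fix $z\in\rset^d$ and observe that $G_z^{\MALA}(u) = z + \stepsize\nabla\log\pi(z) + \sqrt{2\stepsize}\,u$ is affine in $u$ with constant linear part $\sqrt{2\stepsize}\,\IdM$. Since this matrix is invertible, $G_z^{\MALA}$ is a $\rmc^{\infty}$-diffeomorphism (and in particular a $\rmc^1$-diffeomorphism), with explicit inverse $z'\mapsto (2\stepsize)^{-1/2}(z' - z - \stepsize\nabla\log\pi(z))$. This part requires no regularity assumption on $\pi$ beyond differentiability of $\log\pi$ at $z$.

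For the second claim, fix $u\in\rset^d$ and write $\propmap{u}^{\MALA}(z) = z + \Phi(z)$ with $\Phi(z) = \stepsize\nabla\log\pi(z) + \sqrt{2\stepsize}\,u$. Under $\lipcon$-smoothness and $\stepsize\lipcon < 1$, the map $\Phi$ is an $\stepsize\lipcon$-Lipschitz perturbation of the identity. I would then establish three properties. First, injectivity follows from the reverse triangle inequality: $\|\propmap{u}^{\MALA}(z_1) - \propmap{u}^{\MALA}(z_2)\| \geq (1-\stepsize\lipcon)\|z_1 - z_2\|$, which forces $z_1 = z_2$ whenever the images coincide. Second, surjectivity follows from the Banach fixed-point theorem applied, for each target $y\in\rset^d$, to the contraction $z \mapsto y - \stepsize\nabla\log\pi(z) - \sqrt{2\stepsize}\,u$. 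Third, the Jacobian $D\propmap{u}^{\MALA}(z) = \IdM + \stepsize\nabla^2\log\pi(z)$ is invertible at every $z$: the $\lipcon$-Lipschitz property of $\nabla\log\pi$ yields $\|\nabla^2\log\pi(z)\|_{\mathrm{op}} \leq \lipcon$ wherever the Hessian exists, so the spectrum of $D\propmap{u}^{\MALA}(z)$ lies in $(1-\stepsize\lipcon, 1+\stepsize\lipcon) \subset (0,2)$. Combining the bijection above with the local inverse function theorem applied at each point upgrades $\propmap{u}^{\MALA}$ to a $\rmc^1$-diffeomorphism on $\rset^d$.

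The main subtlety is a regularity point rather than a topological one: the $\lipcon$-smoothness hypothesis as stated only guarantees that $\nabla\log\pi$ is Lipschitz, hence differentiable almost everywhere by Rademacher's theorem but not a priori $\rmc^1$. The conclusion that $\propmap{u}^{\MALA}$ is a genuine $\rmc^1$-diffeomorphism therefore requires the implicit assumption that $\log\pi\in\rmc^2(\rset^d)$, which is standard in the MALA literature and harmless for the applications considered here. Without this extra regularity the same argument still yields that $\propmap{u}^{\MALA}$ is a bi-Lipschitz homeomorphism with Lipschitz constants $1 \pm \stepsize\lipcon$, which is already enough to justify the change-of-variables formula \eqref{eq:density_proposal} used in the sequel.
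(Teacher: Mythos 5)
Your proof is correct and follows essentially the same route as the paper's: invertibility of $\propmap{u}^{\MALA}$ via the Banach fixed-point theorem applied to $z \mapsto y - \eta\nabla\log\pi(z) - \sqrt{2\eta}\,u$, followed by the local inverse function theorem. You are in fact slightly more careful than the paper, which omits both the explicit verification that the Jacobian $\IdM + \eta\nabla^2\log\pi(z)$ is invertible and the observation that $\lipcon$-smoothness alone does not supply the $\rmc^1$ regularity of $\nabla\log\pi$ that the inverse function theorem requires.
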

The proof of \Cref{lem:langevin_map_diff} is postponed to \Cref{spsec:proofs}.
\subsection{Differentiable AIS-based ELBO}
  We now generalize the derivation of \Cref{subsec:sis_est} to handle the case where $M_k$ and its reversal do not admit transition densities. %However this can be extended to scenarii where this is not satisfied.
  In this case, the  proposal and unnormalized target distributions are defined by  
  \begin{align}
  \label{eq:definie-extended}
    & \extendedfor^K(\rmd \chunk{z}{0}{K}|x) = \initdistr(z_0| x)\rmd z_0\prod_{k=1}^K M_k(z_{k-1},\rmd z_k)\eqsp,\\
    \nonumber
   \label{eq:define-unnormalized}
    & \unnormextendedback(x, \rmd \chunk{z}{0}{K}) = p_\theta(x, z_K)\rmd z_K \prod_{k=K}^1 L_{k-1}(z_{k}, \rmd z_{k-1}) \eqsp,
  \end{align}
  where we define the reversal Markov kernel  $L_{k-1}$ by 
  \begin{multline*}
    \unnormdistr_k(z_{k-1}) \rmd z_{k-1} M_k(z_{k-1},\rmd z_{k})\\ =\unnormdistr_{k}(z_{k})\rmd z_k L_{k-1}(z_{k},\rmd z_{k-1})  \eqsp.
  \end{multline*}
We consider then the AIS estimator presented in Section~\ref{subsec:sis_est} in~\eqref{eq:SiS-estimator-Z} by sampling independently $\chunk{z}{0}{K}^i \sim \extendedfor^K(\cdot | x)$  and with $w^K(z_{0:K})=\prod_{k = 1}^K w_{k} (z_{k-1}), \, w_{k} (z_{k-1}) = \unnormdistr_k(z_{k - 1}) / \unnormdistr_{k-1}(z_{k - 1})$. A rigorous proof of the unbiasedness of the resulting estimator can be found in the Supplementary Material and is based on the formula
  \begin{equation}
  \label{eq:smcsampler_identity}
    \normconst = \int w^K(\chunk{z}{0}{K}) \extendedfor^K(\rmd \chunk{z}{0}{K}| x) \eqsp.
  \end{equation}
%   If we can compute the weights $\{w_{k,x}\}_{k=1}^K$, then we have access to an unbiased estimator of $\normconst$ that can be very efficient.
%   If we choose the kernels so that $M_k$ is $\unnormdistr_k$-reversible and $L_{k-1} = M_k$\textcolor{red}{you do not need to limit ourselves to reversible, if it is not reversible just pick the reversal of $M_k$ for $L_{k-1}$.}, i.e. $\unnormdistr_k(\rmd z_k) M_k(z_k,\rmd z_{k-1})=\unnormdistr_{k}(\rmd z_{k-1}) M_{k}(z_{k-1},\rmd z_{k})$. Then, \eqref{eq:incweight} becomes
%   %For $M_k$ being $\unnormdistr_k$-invariant and $L_{k-1}$ being the corresponding reversal, i.e. $\unnormdistr_k(\rmd z_k) L_{k-1}(z_k,\rmd z_{k-1})=\unnormdistr_{k}(\rmd z_{k-1}M_{k}(z_{k-1},\rmd z_{k})$, we obtain
%   \begin{equation}
%     \label{eq:weights_ais}
%     w_{k}(z_{k - 1}, z_k) = w_k(z_{k-1}) = \frac{\unnormdistr_{k}( z_{k-1})}{\unnormdistr_{k - 1}(z_{k - 1})}\eqsp,
%   \end{equation}
%   given also by the Jarzynski--Crooks identity~\cite{Jarzynski:1997,crooks1998nonequilibrium} also known as Annealed Importance Sampling identity~\cite{neal2001annealed}. A rigourous proof is given in Supplementary, \Cref{spsec:proofs} \textcolor{red}{explain that it is just a  }
  %, see  \Cref{sec:supplementary} for a formal measure-theoretic proof.
  In the sequel, we use   Metropolis Adjusted Langevin Algorithm (MALA) kernels $\{ M_k \}_{k=1}^K$ targeting $\unnormdistr_k$ for each $k \in \{1,\dots,K\}$. By construction, the Markov kernel $M_k$ is reversible \wrt\ $\unnormdistr_k$ and we set for the reversal kernel $L_{k-1}=M_k$. Note that we could easily generalize to other cases, especially inspired by recent works on non-reversible MCMC algorithms~\cite{thin:kotelevskii:andrieu:moulines:2020}.
  
  For $k \in \{1,\dots,K\}$, we use the representation of MALA kernel $M_k$ outlined in \eqref{eq:mh_kernel} with proposal mapping  $\propmap{k,u}$ and acceptance function $\alpha_{k,u}$ defined as in \eqref{eq:langevin_prop} and \eqref{eq:mala_acceptance} with $\pi \leftarrow \gamma_k$. We set
  \begin{equation}
\label{eq:sum_kernel_Q_a_r}
Q_{k,u}(z,\rmd z')= \sum_{a=0}^1\alpha_{k,u}^a(z) \updelta_{\propmap{k,u}^a(z)}(\rmd z')\eqsp.
  \end{equation}
  By construction, the MALA kernel $M_k$ (see \eqref{eq:mh_kernel}) writes $M_k(z,\rmd z') = \int Q_{k,u}(z,\rmd z') \densinnovation(u) \rmd u $. Plugging this representation into \eqref{eq:definie-extended}, we get
  \begin{multline}
    \nonumber
    \extendedfor^K(\rmd \chunk{z}{0}{K} | x) ={\int_{}} \prod_{k=1}^K Q_{k,u_k}(z_{k - 1}, \rmd z_k) \\ \times \initdistr( z_0| x)  \densinnovationk(\chunk{u}{1}{K})\rmd z_0\rmd\chunk{u}{1}{K} \eqsp,
  \end{multline}
  writing $\densinnovationk(\chunk{u}{1}{K}) = \prod_{i=1}^K\densinnovation(u_i)$.
  %Note, for $1 \leq t \leq K, u\in\msu$, $\alpha^1_{t, u_t} \equiv \alpha_{t, u_t}$ and $\alpha^0_{t, u_t}\equiv 1-\alpha_{t, u_t}$, and 
%  Note, for transformations $\{T_j\}_{j=i}^K$, $\Circ{j=i}{K} = T_K\circ\dots\circ T_i$. 
%Write by convention for a transformation $T$, $T^1 = T$ and $T^0 = \Id$.
 Eq.~\eqref{eq:sum_kernel_Q_a_r} suggests to consider the extended distribution on ${( \chunk{z}{0}{K}, \chunk{a}{1}{K}, \chunk{u}{1}{K}})$: 
  \begin{multline*}
    \extendedfor^K({\rmd \chunk{z}{0}{K}, \chunk{a}{1}{K},\rmd \chunk{u}{1}{K}}| x) =  \initdistr(z_0| x) \prod_{k=1}^K \alpha^{a_k}_{k, u_k} \bigl( z_{k - 1}\bigr)
    \\
    \times \prod_{k=1}^K  \updelta_{\propmap{k, u_{k}}^{a_k}(z_{k-1})}(\rmd z_k) \densinnovationk(\chunk{u}{1}{K})\rmd z_0\rmd\chunk{u}{1}{K} \eqsp,
  \end{multline*}
  which admits again as a marginal $\extendedfor^K(\rmd \chunk{z}{0}{K} | x)$. Note that the variables $\chunk{a}{1}{K}$ correspond to the binary outcomes of the $K$ A/R steps. By construction, $\chunk{z}{1}{K}$ are \emph{deterministic functions} of $z_0$, $\chunk{a}{1}{K}$ and $\chunk{u}{1}{K}$: for each $k \in \{1,\dots,K\}$, $z_k= \Circtext{i = 1}{k} \propmap{i, u_i}^{a_i} (z_0)$.
  
  Set $w^K(z_{0:K})=\prod_{k = 1}^K w_{k} (z_{k-1})$,
  and 
  \begin{align*}
      &A(z_0, \chunk{a}{1}{K}, \chunk{u}{1}{K}) = \prod_{k=1}^K \alpha^{a_k}_{k, u_k} \bigl(\Circ{i=1}{k-1} \propmap{i, u_i}^{a_i} (z_0)\bigr)\eqsp,\\
      &W(z_0, \chunk{a}{1}{K}, \chunk{u}{1}{K}) = \log\prod_{k = 1}^{K} w_{k}\Bigl(\Circ{i = 1}{k-1} \propmap{i, u_i}^{a_i} (z_0)\Bigr)\eqsp.
  \end{align*}
  Given $z_0$ and $\chunk{u}{1}{K}$, $A(z_0, \chunk{a}{1}{K}, \chunk{u}{1}{K})$ is the conditional  distribution of the A/R random variables $\chunk{a}{1}{K}$. It is easy to sample this distribution recursively: for each $k \in \{1,\dots,K\}$ we sample $a_k$ from a Bernoulli distribution of parameter $\alpha_{k, u_k}(z_{k-1})$ and we set $z_k= \propmap{k,u_k}^{a_k}(z_{k-1})$.  
  Eqs.~\eqref{eq:SiS-estimator-Z} and~\eqref{eq:AISweight} naturally lead us to consider the ELBO
  \begin{align*}
      \elboais&= \int_{} \sum_{\chunk{a}{1}{K}}  \extendedfor^K({\rmd \chunk{z}{0}{K}, \chunk{a}{1}{K},\rmd \chunk{u}{1}{K}} | x) \log\left[\w^K(z_{0:K})\right] \\
    %&\geq   \int \sum_{\chunk{a}{1}{K}}  \initdistr( z_0) \prod_{t=1}^K \alpha^{a_t}_{t,u_t} \bigl( z_{t-1}\bigr) \densinnovation( \chunk{u}{1}{K})  \updelta_{\propmap{t, u_{t}}^{a_t}(z_{t-1})}(\rmd z_t) \log\left( \prod_{t=1}^{K}  w_t(z_{t-1}, z_t)\right)\rmd z_0\rmd \chunk{u}{1}{K}\\
    &= \int \sum_{\chunk{a}{1}{K}}  \initdistr( z_0 | x) A(z_0, \chunk{a}{1}{K}, \chunk{u}{1}{K})   \\
    &\qquad\quad\times W(z_0, \chunk{a}{1}{K}, \chunk{u}{1}{K})  \, \densinnovationk(\chunk{u}{1}{K}) \rmd z_0 \rmd \chunk{u}{1}{K}\eqsp,
  \end{align*}
  where in the last line we have integrated \wrt~$z_{1:K}$. The fact that $\elboais$ is an ELBO stems immediately from \eqref{eq:smcsampler_identity}, by applying Jensen's inequality. We can optimize this ELBO \wrt~the different parameters at stake, even possibly the parameters of the proposal mappings.

We  reparameterize  the  latent variable distribution $q_\phi(z_0|x)$ in terms  of  a  known  base  distribution  and a  differentiable  transformation  (such  as  a  location-scale transformation). Assuming for simplicity that $q_\phi(z_0|x)$ is a Gaussian distribution $\Normal(z_0;\mu_\phi(x),\sigma_\phi^2(x))$, then the location-scale
transformation  using  the  standard  Normal  as  a  base distribution allows us to reparameterize $z_0$ as
$z_0 = \mu_\phi(x) + \sigma_\phi(x) \cdot u_0 = V_{\phi,x}(u_0)$, with $u_0\sim\densinnovation$. 
Using this reparameterization trick, we can write $\nabla\elboais = \nabla{\elboais}_1 + \nabla{\elboais}_2$ with
{\small
\begin{align}
\nonumber
    &\nabla{\elboais}_1= \int_{}\sum_{\chunk{a}{1}{K}}A(V_{\phi,x}(u_0), \chunk{a}{1}{K}, \chunk{u}{1}{K})\\
    \nonumber&\times \nabla W(V_{\phi,x}(u_0), \chunk{a}{1}{K}, \chunk{u}{1}{K})\densinnovationkp(\chunk{u}{0}{K})\rmd \chunk{u}{0}{K}\eqsp,\\
   \nonumber &\nabla{\elboais}_2= \int_{}\sum_{\chunk{a}{1}{K}}A(V_{\phi,x}(u_0), \chunk{a}{1}{K}, \chunk{u}{1}{K}) W(V_{\phi,x}(u_0), \chunk{a}{1}{K}, \chunk{u}{1}{K})\\ 
   \label{eq:reinforce_ais}
   &\times  \nabla\log A(V_{\phi,x}(u_0), \chunk{a}{1}{K}, \chunk{u}{1}{K}) \densinnovationkp(\chunk{u}{0}{K})\rmd \chunk{u}{0}{K}\eqsp.
\end{align}}
The estimation  of $\nabla{\elboais}_1$ is straightforward. We sample $n$ independent samples $\chunk{u}{0}{K}^{1:n}\sim\densinnovationkp$ and, for $i \in \{1,\dots,n\}$, we set $z_0^i= V_{\phi, x}(u_0^i)$ and then, for $k \in \{1,\dots,K\}$, we sample the A/R variable $a_k^i\sim\Ber\{\alpha_{k, u_k^i}^1(z_{k-1}^i)\}$ and set $z_k^i = \propmap{k, u_k^i}^{a_k^i}(z_{k-1}^i)$, see \Cref{alg:differentiable_ais}. 
Similarly, we then compute
\[
\widehat{\nabla \elboais}_{2,n} =  n^{-1}\sum_{i=1}^n  \nabla W(V_{\phi,x}(u^i_0), \chunk{a^i}{1}{K}, \chunk{u^i}{1}{K}) \eqsp.
\]
The expression ${\nabla \elboais}_2$ is the REINFORCE gradient estimator \cite{williams1992simple} for the A/R probabilities. Indeed, we have to compute  the gradient of the conditional distribution of the A/R variables given $(z_0,\chunk{u}{0}{K})$, and there is no obvious reparametrization for such purpose (see however   \cite{maddison2016concrete} for a possible solution to the problem; this solution was not investigated in this work).
%is not directly reparametrizable, and we thus  of the gradient over this quantity\textcolor{red}{cette partie est pas super claire, tu dis que tu peux differentier avec le param trick et 3 lignes plus loin tu dis a non en fait il y a une partie ou c'est pas le cas}. The precise expression is given in Supplementary, due to lack of space here.
To reduce the variance of the REINFORCE estimator, we rely on control variates, in the spirit of~\cite{mnih2016variational}.
  %By drawing $n$ samples $\chunk{u}{0}{K}^{1:n}\overset{\textup{i.i.d.}}{\sim} \densinnovationkp$, we can build a $n$ sample estimate $\widehat{ \elboais}_n$ by averaging over the $n$ trials. Moreover, 
For $i\in\{1,\dots, n\}$, we define
  \[
     \tilde W_{n,i} = \frac{1}{n-1}\sum_{j\neq i} W(V_{\phi,x}(u_0^j), \chunk{a^j}{1}{K}, \chunk{u^j}{1}{K})\eqsp,
  \]
  which is independent of $W(V_{\phi,x}(u_0^i), \chunk{a^i}{1}{K}, \chunk{u^i}{1}{K})$ and $\nabla \log A(V_{\phi,x}(u_0^i), \chunk{a^i}{1}{K}, \chunk{u^i}{1}{K})$ by construction.
  %denoting $z_0^l = V_{\phi,x}(u_0^l)$ and $ z_k^l = \Circtext{i = 1}{k} \propmap{i, u^l_i}^{a^l_i} (z_0^{l})$. 
  This provides the new unbiased estimator of the gradient using
  \begin{multline}
    \widehat{\nabla \elboais}_{2,n} =  n^{-1}\sum_{i=1}^n\left[W(V_{\phi,x}(u_0^i), \chunk{a^i}{1}{K}, \chunk{u^i}{1}{K}) -  \tilde{W}_{n,i} \right]
      \\
      \times\nabla \log A(V_{\phi,x}(u_0^i), \chunk{a^i}{1}{K}, \chunk{u^i}{1}{K}) \eqsp.
  \end{multline}
  %
  %Moreover, thanks to the expression of the AIS estimator, even though we are performing MH steps, we are exactly optimizing an estimator (and a close one) to the normalizing constant, contrarily to~\cite{thin:2020}.
 \Cref{alg:differentiable_ais} shows how to compute $W$ and $\log A$.
  \begin{algorithm}[!ht]
    \caption{Annealed Importance Sampling VAE}
    \label{alg:differentiable_ais}
    \begin{algorithmic}
      \STATE {\bfseries Input:} Number of steps $K$, proposal mappings $\{\propmap{k, u}\}_{k\leq K, u\in\msu}$, acceptance functions $\{\alpha_{k, u}\}_{k \leq K, u\in\msu}$, initial distribution $\initdistr$, unnormalized target distribution $p_\theta$, annealing schedule $\{\beta_k\}_{k = 0}^{K}$.
       \STATE {\bfseries Output:} AIS estimator $W$ of $\log p_\theta(x)$, sum $\log A$ of the A/R log probabilities.
        \\\hrulefill
        \STATE Draw $z_0\sim \initdistr(\cdot| x)$;
        \STATE Set $W = 0$;%\log w_0(z_0) = (\beta_1 -\beta_0) (\log\unnormdistr(z_0) -\log \initdistr(z_0))$;
        \STATE Set $\log A = 0$;
      \FOR{$k=1$ {\bfseries to} $K$}
      \STATE Draw $u_k\sim \densinnovation$;
      \STATE Draw $a_k \sim \Ber(\alpha_{k,u_k}\bigl(z_{k - 1})\bigr)$;
      \IF{$a_k= 1$ (Accept)}
      \STATE Set $z_k = z_{k-1} +\stepsize \nabla \log \unnormdistr_k(z_k)  + \sqrt{2\stepsize}u_k$;
      \ELSE
      \STATE $z_k = z_{k-1}$;
      \ENDIF
      \STATE Compute $\log w_{k}(z_{k - 1}) =$ \\
      \qquad$(\beta_{k} -\beta_{k-1}) \bigl(\log p_\theta(x, z_{k - 1}) - \log\initdistr(z_{k - 1}| x)\bigr)$;
      \STATE Set $W = W + \log w_{k}(z_{k - 1})$;
      \STATE Set $\log A = \log A + \log \alpha^{a_k}_{k, u_k}(z_{k - 1})$;
      \ENDFOR
      \STATE Return $W, \log A$
    \end{algorithmic}
  \end{algorithm}  
\section{Experiments}
\label{sec:experiments}
\subsection{Methods and practical guidelines}
  %In turn, we present the results of the SMC versions of the VAE presented above.
  %In the following, let $K$ be the number of annealing steps. performed in the SIS estimator,
  In what follows, we consider two sets of experiments\footnote{The code to reproduce all of the experiments is available online at \url{https://github.com/premolab/metflow/}.}. In the first one, we aim at illustrating the expressivity and the efficiency of our estimator for VI. In the second, we tackle the problem of learning VAE:
  \begin{enumerate*}[label=(\alph*)]
  %[wide, labelwidth=!,labelindent=0pt]
    \item Classical VAE based on mean-field approximation~\cite{kingma:welling:2014};

    \item Importance-weighted Autoencoder (IWAE, \cite{burda:grosse:2015});

    %\item Normalizing flows via Real-valued Non-Volume Preserving transformation (RealNVP, \cite{dinh2016density});

    \item L-MCVAE given by Algorithm~\ref{alg:differentiable_sis_flows};

    \item A-MCVAE given by Algorithm~\ref{alg:differentiable_ais}.
    \end{enumerate*}
  We provide in the following some guidelines on how to tune the step sizes and the annealing schedules in \Cref{alg:differentiable_sis_flows} and \Cref{alg:differentiable_ais}.

  A crucial hyperparameter of our method is the step size $\stepsize$. In principle, it could be learned by including it as an additional inference parameter $\phi$ and by maximizing the ELBO. However, it is then difficult to find a good trade-off between having a high A/R ratio and a large step size $\stepsize$ at the same time. Instead, we suggest adjusting $\stepsize$ by targeting a fixed A/R ratio $\rho$. It has proven effective to use a preconditioned version of \eqref{eq:langevin_eq}, \ie~ 
$Z_{k} = Z_{k - 1} + \stepsize \odot \nabla\log\unnormdistr_k(Z_{k - 1}) + \sqrt{2 \stepsize} \odot U_{k}$ with $\mathbf{\stepsize} \in \rset^p$, where we adapt each component of $\eta$ using the following rule 
  \begin{equation}
    \mathbf{\stepsize}^{(i)} = 0.9 \mathbf{\stepsize}^{(i)} + 0.1 \stepsize_0 / \bigl(\epsilon + \operatorname{std}[ \partial_{z^{(i)}} \log p_\theta(x, z)]\bigr)\eqsp.
  \end{equation}
 Here $\operatorname{std}$ denotes the standard deviation over the batch $x$ of the quantity $\partial_{z^{(i)}} \log p_\theta(x, z)$, and $\epsilon > 0$. The scalar $\stepsize_0$ is a tuning parameter which is adjusted to target the A/R ratio $\rho$. 
  %The quantity $\partial_z \log p_\theta(x, z)$ controls the geometry of the latent space. Such a scheme allows us to take smaller moves on the regions where the potential can be very steep. 
  This strategy follows the same heuristics as Adam~\cite{kingma2014adam}.
  In the following $\rho$ is set to 0.8 for A-MCVAE and 0.9 for L-MCVAE (keeping it high for L-MCVAE  ensures that the Langevin dynamics stays ``almost reversible'', thus keeping a low variance SIS estimator).

  An optimal choice of the temperature schedule $\{\beta_k\}_{k = 0}^K$ for SIS and AIS is a difficult problem.   We have focused in our experiments on three different settings. First, we consider the temperature schedule fixed and regularly spaced between 0 and 1. Following~\cite{grosse:Ghahramani:adams:2015}, the second option is the sigmoidal tempering scheme where $\beta_k = (\tilde \beta_k - \tilde \beta_1)/(\tilde \beta_K-\tilde \beta_1)$ with, $\tilde \beta_k = \sigma\bigl(\delta(2k / K -1)\bigr)$,  $\sigma$ is the sigmoid function and $\delta >0$ is a parameter that we optimize during the training phase. The last schedule consists in learning  the temperatures $\{\beta_k\}_{k = 0}^K$ directly as additional inference parameters $\phi$.

\subsection{Toy 2D example and Probabilistic Principal Component Analysis}
\label{subsec:toy_ex_ppca}
In the following two examples, we fix the parameters $\theta$ of the likelihood model and apply \Cref{alg:differentiable_sis_flows} and \Cref{alg:differentiable_ais} to perform VI to sample from  $z \mapsto p_{\theta}(z  | x)$. 
  Consider first a toy hierarchical example where we generate some \iid~data $x=(x_i)_{i=1}^N \in \rset^N$  from the \iid~latent variables $z =(z_i)_{i=1}^N \in \rset^{2N}$ as follows for $\xi >0$ %\textcolor{red}{$\eta$ already used as the stepsize of ULA, confusing notation that needs to be changed here and in the supplementary}: $z_i \sim \Normal(0; \Id)$ and %$x = \eta \cdot (\|z\|^2 + \zeta) + \epsilon$, where $\epsilon \sim \Normal(0, \sigma^2)$. 
 $x_i| z_i\sim \Normal(\xi \cdot (\|z_i\|^2 + \zeta), \sigma^2)= p_\theta(x_i| z_i)$.
 We consider the variational approximation as $q_\phi(z| x) = \Normal\bigl(z; \mu_\phi(x),\sigma_\phi(x)^2\Id\bigr)$, where $\mu_\phi(x), \sigma_\phi(x)\in\rset^d$ are the outputs of a fully connected neural network with parameters $\phi$.
 We compare these algorithms to VI using Real-valued Non-Volume Preserving transformation (RealNVP, \citet{dinh2016density}).
  %We compare L-MCVAE, A-MCVAE, to Varational Inference (VI) with mean-field approximation, VI using IWAE bound, and VI using Real-valued Non-Volume Preserving transformation - RealNVP, \cite{dinh2016density}\alain{j'ai changé mais c'est encore trop flou}
  %For the specific example below, we are using the $\eta = 2 \pi$ (so the reader can have analogy to circle's perimeter formula, thus expect circle as a true posterior), $\zeta = 2$ and $\sigma = 1$.
  %This defines the likelihood of $x$ given $z$, and we can thus perform Variational Inference. 
  \begin{figure}
    \centering
    \renewcommand{\arraystretch}{0.1}
    \begin{tabular}{ccc}
      \hspace{-10pt}\includegraphics[width=0.3\linewidth]{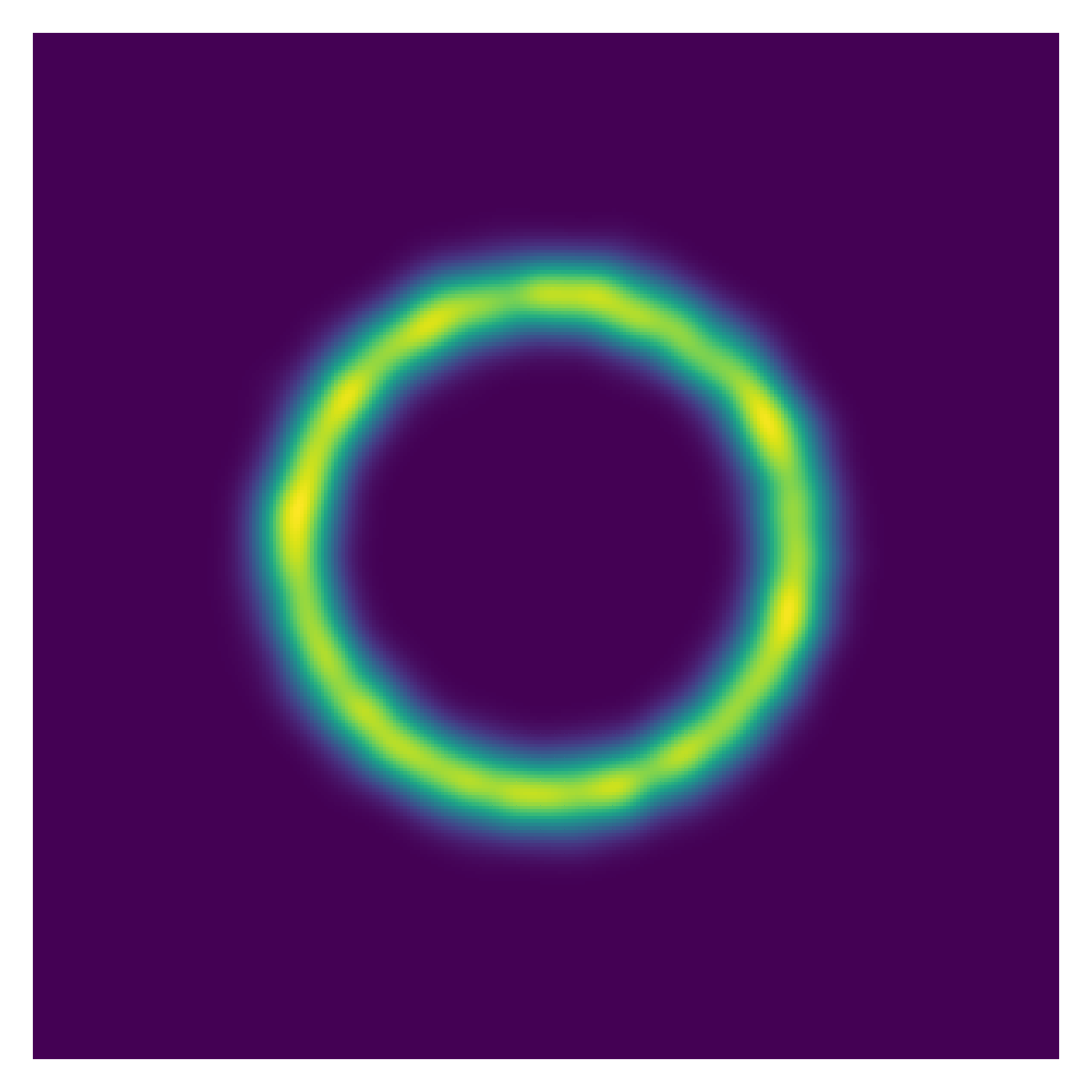} & \hspace{-20pt}\includegraphics[width=0.3\linewidth]{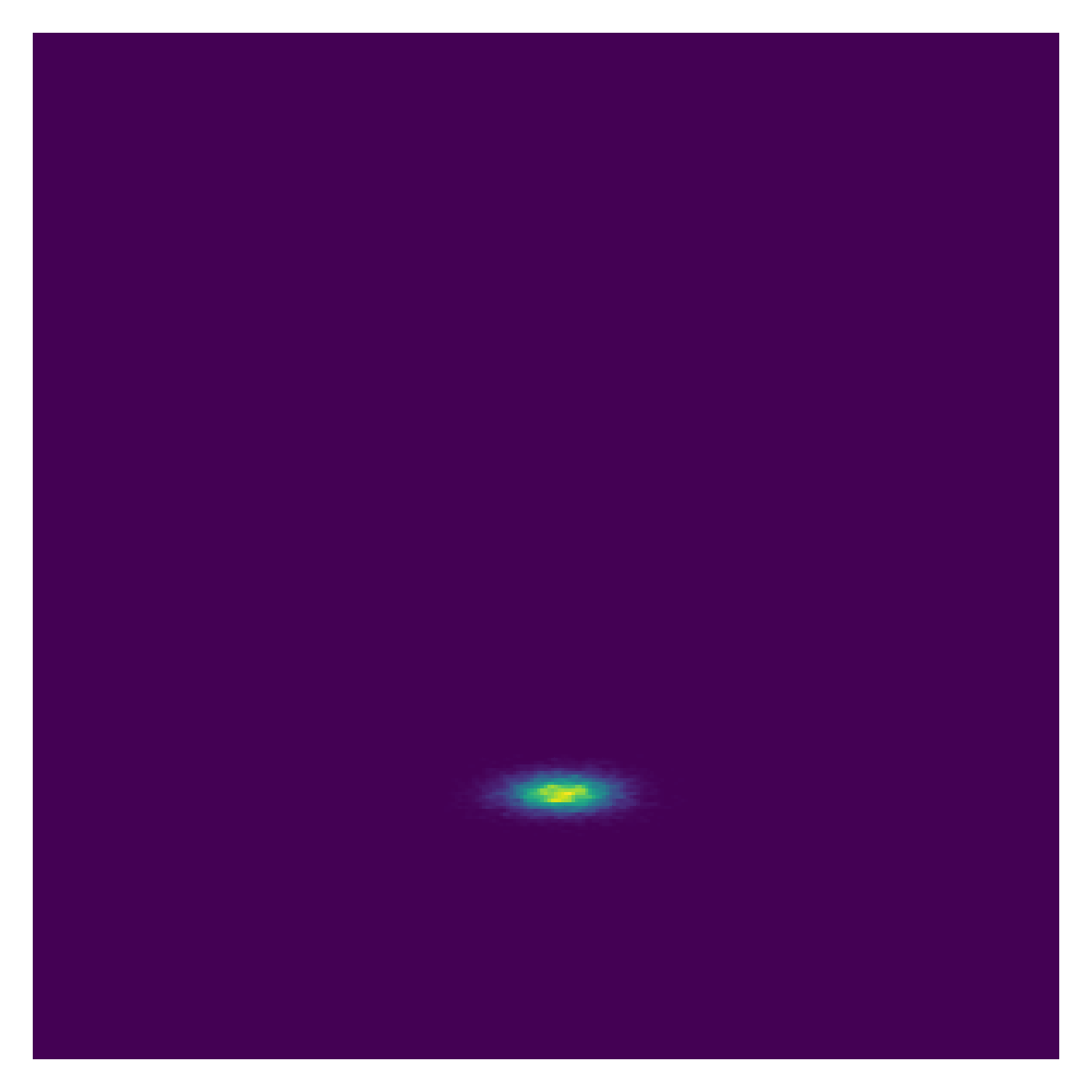} & \hspace{-20pt}\includegraphics[width=0.3\linewidth]{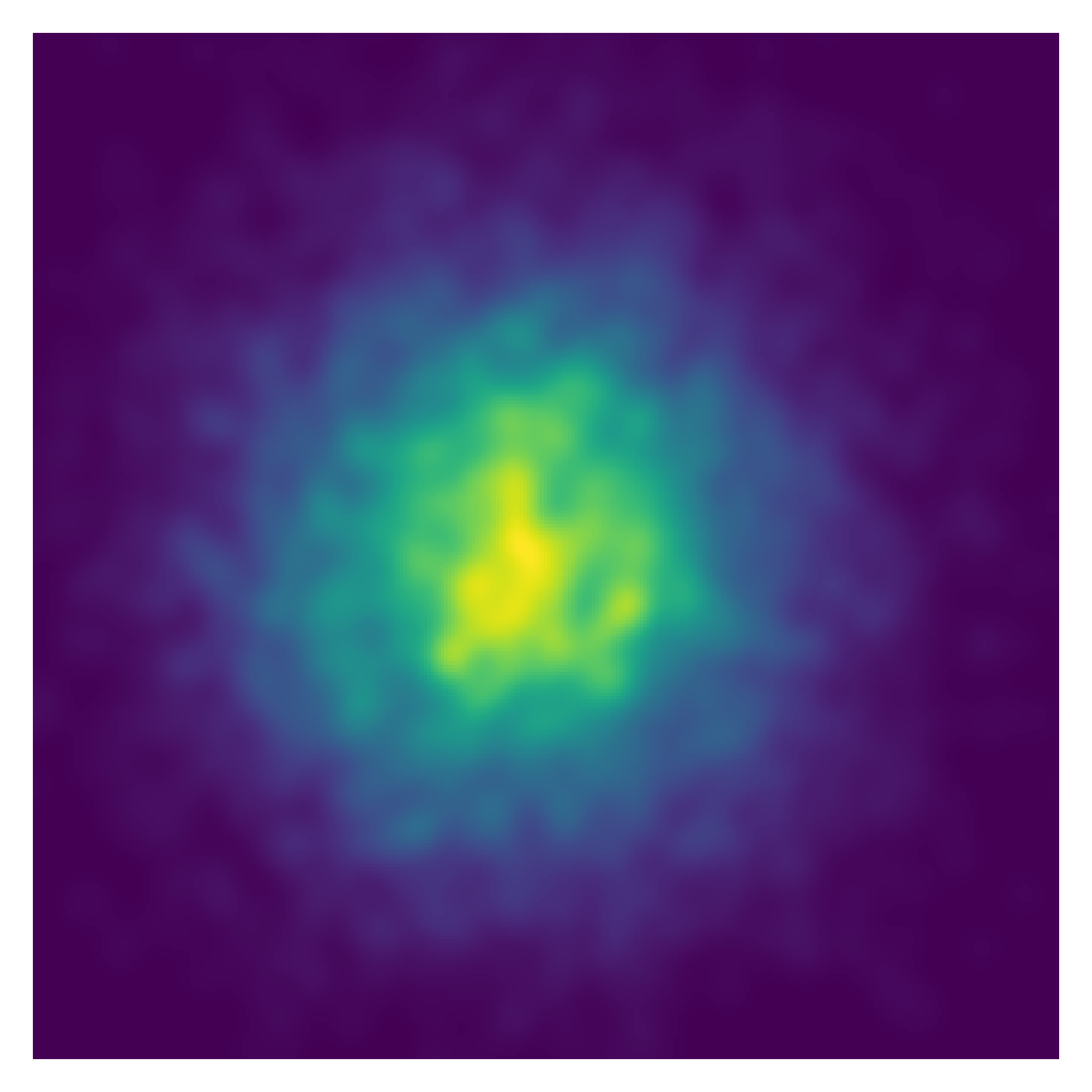} \\
      \hspace{-10pt}\includegraphics[width=0.3\linewidth]{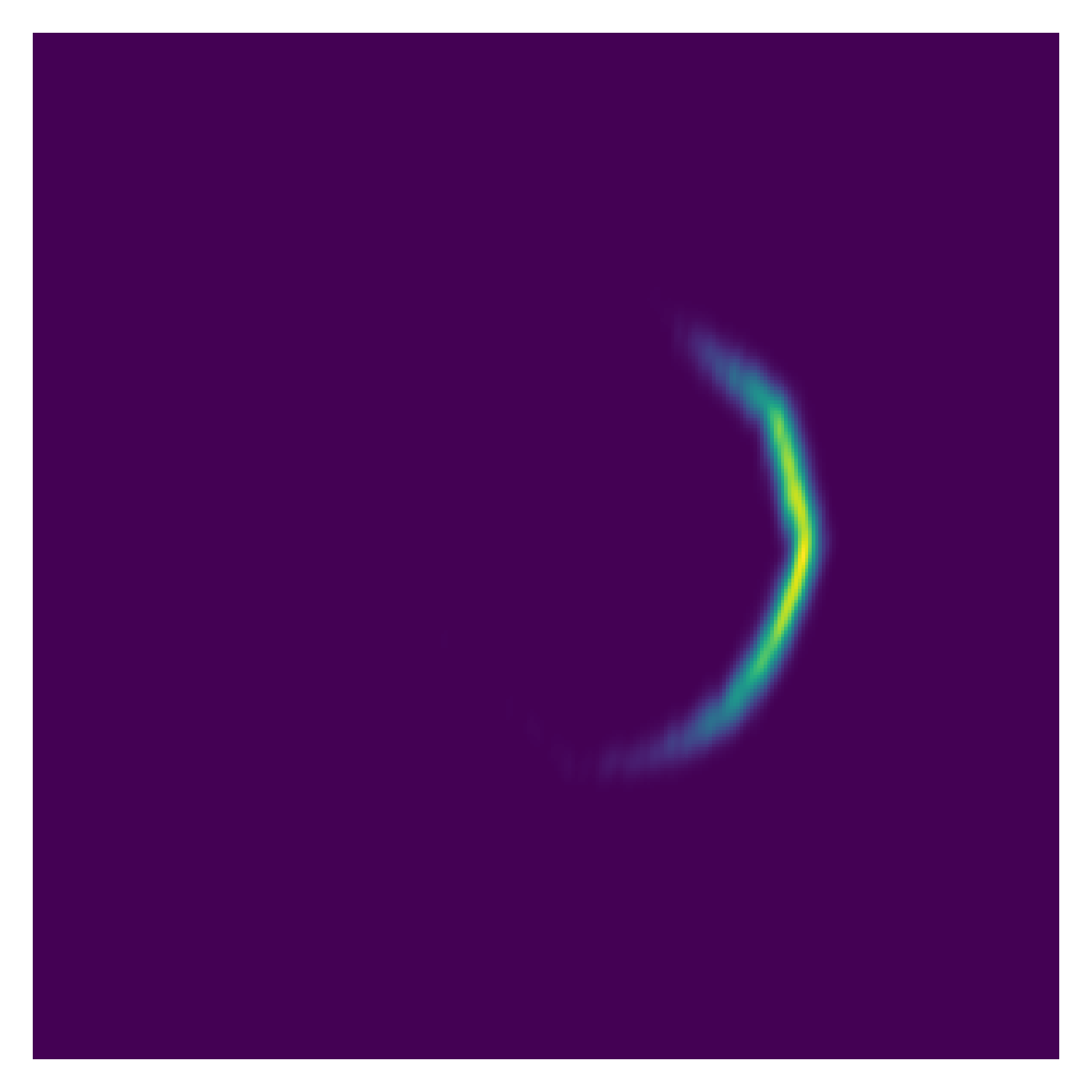} & \hspace{-20pt}\includegraphics[width=0.3\linewidth]{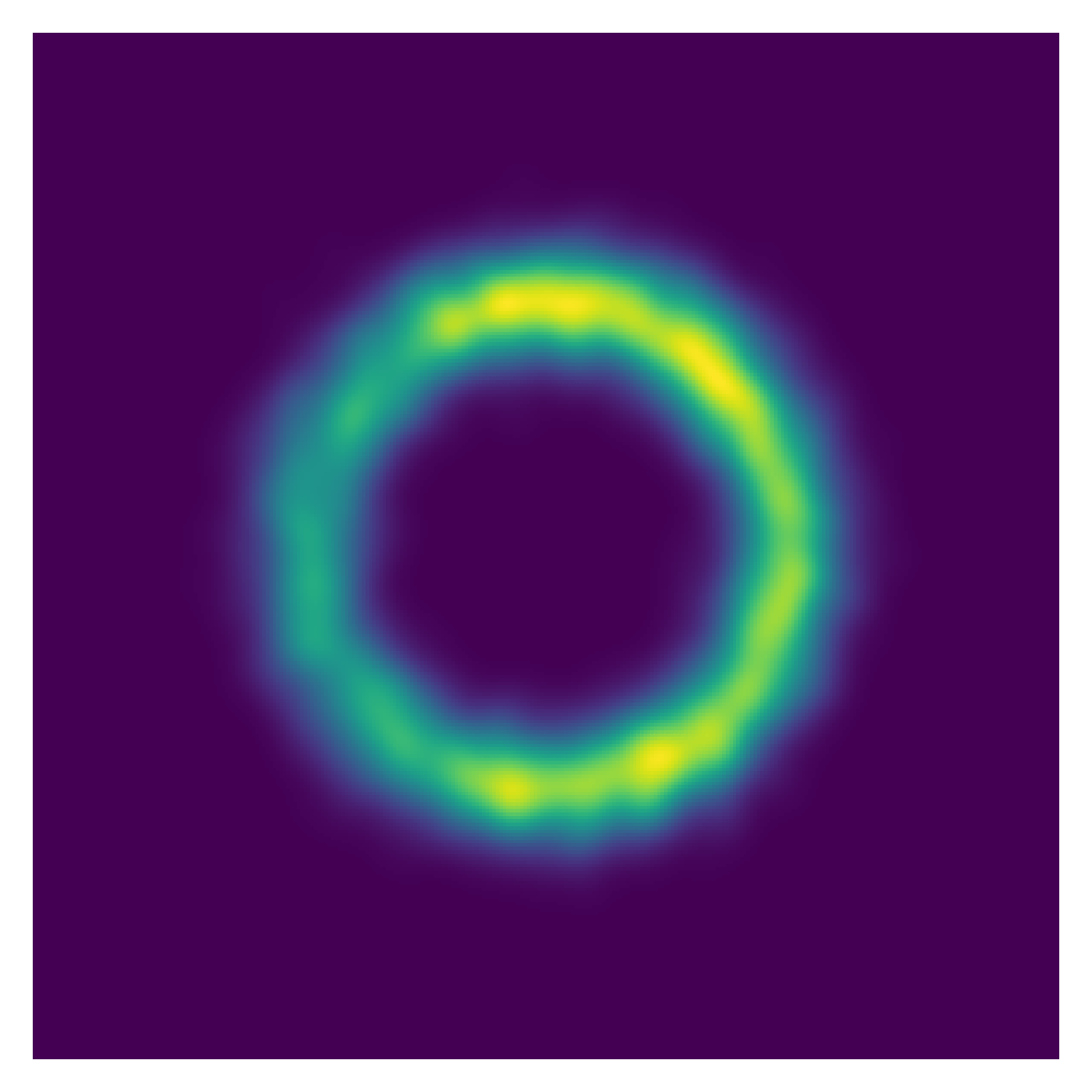} & \hspace{-20pt}\includegraphics[width=0.3\linewidth]{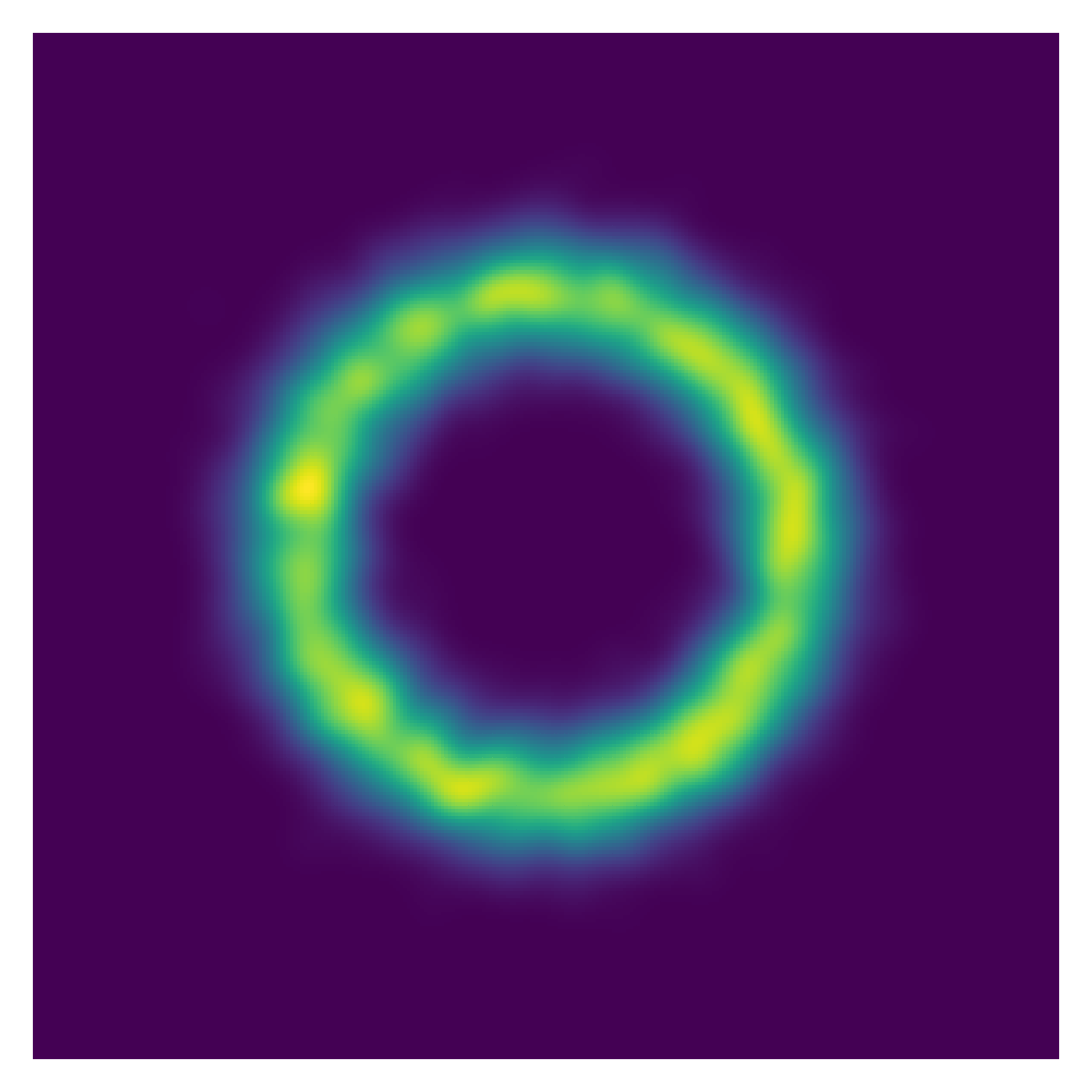}
    \end{tabular}
    \caption{Visualization of the posterior approximation given after optimization of different bounds for toy generation process. Top row, from left to right: True posterior, VAE posterior, IWAE posterior. Bottom row, from left to right: VI with RealNVP posterior, A-MCVAE posterior, L-MCVAE posterior.}
  \label{fig:toy_example_posterior}
  \end{figure}
  
  \Cref{fig:toy_example_posterior} displays the VI posterior approximations corresponding to the different schemes for a given observation $x$. It can be observed that MCVAE benefit from flexible variational distributions compared to other classical schemes, which mostly fail to recover the true posterior. Additional results on the estimation of the parameters $\xi, \zeta$, given in the supplementary material, further support our claims; see \Cref{spsec:exps_toy}.%\textcolor{red}{LINK BROKEN}just on overleaf, no worries.
  
% \alaini{Vu qu'on a de la place, détailler un peu les modèles, les paramètres/schedule utilisés, les quantités qu'on étudie, là c'est vraiment dur à suivre}
  We now illustrate the performance of MCVAE on a probabilistic principal component analysis problem applied to MNIST~\cite{salakhutdinov2008quantitative}, as we can access in this case the exact likelihood and its gradient. We follow the set-up of~\citep[Section 6.1]{ruiz:titsias:doucet:2020}.
  \begin{figure}[!ht]
     \centering
   \hspace{8pt}  \includegraphics[width=\linewidth]{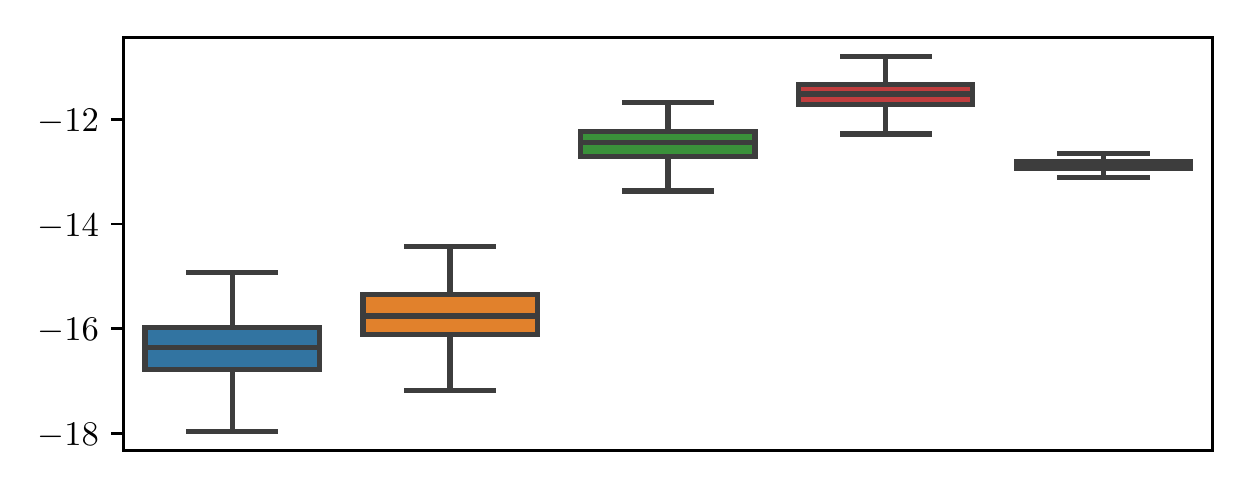}
     \includegraphics[width=\linewidth]{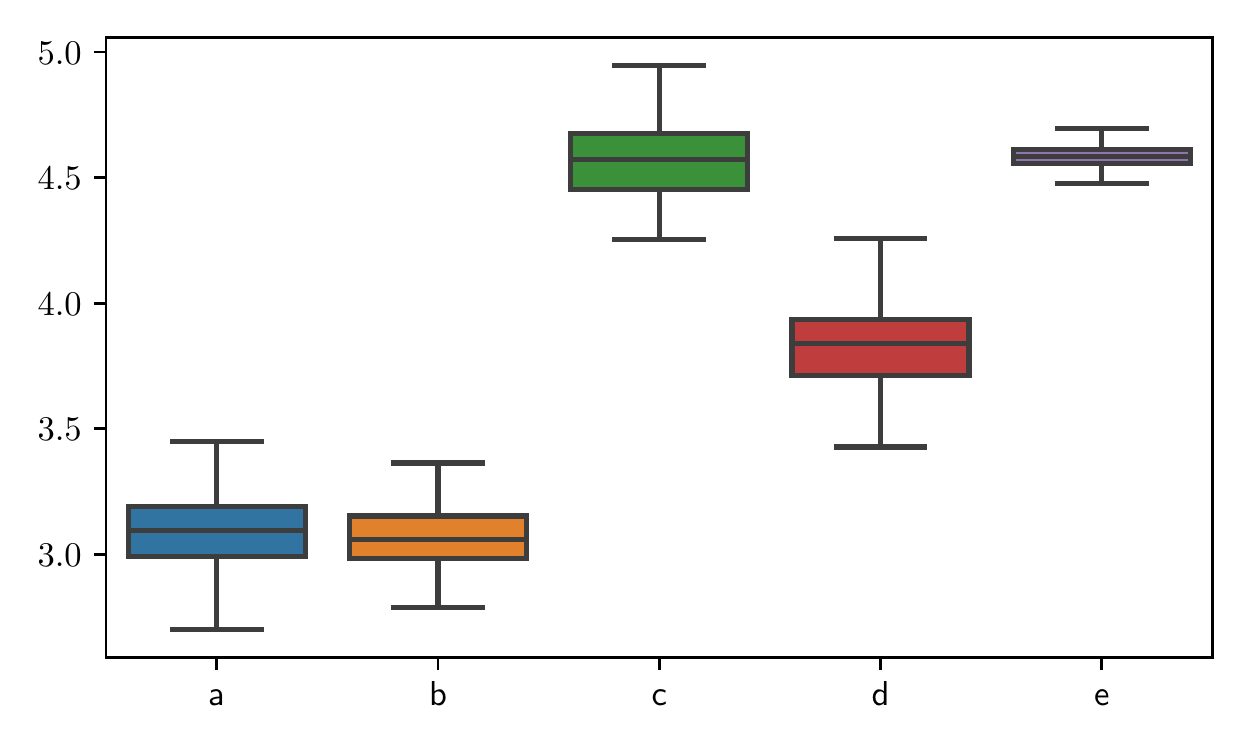}
     \caption{Representation of the different estimators (top) and their gradient (bottom) of the true log likelihood. From left to right, a/ L-MCVAE, $K=5$, b/ L-MCVAE, $K=10$, c/ A-MCVAE, $K=5$, d/ A-MCVAE, $K=10$, e/ A-MCVAE, $K=5$ with control variates.}
     \label{fig:boxplot_ppca}
 \end{figure}
  We consider here a batch of size $N=100$ for the model $p_\theta(x,z) = \Normal(z;0, \Id_d)\Normal(x; \theta_0 + \theta_1 z, \sigma^2\Id_p)
  $, with $d=100$ and $p=784$. We fix arbitrarily $\theta_0, \theta_1$, and fit an amortized variational distribution $q_\phi(z | x)$ by maximizing the IWAE bound \wrt\ $\phi$ with $K= 100$ importance samples for a large number of epochs. The distribution $q_\phi(z | x)= \Normal(z; \mu_\phi(x), \operatorname{diag}(\sigma_\phi^2(x)))$ is a mean-field Gaussian distribution where $\mu_\phi, \sigma_{\phi}$ are linear functions of the observation $x$.
  %We consider a batch of size $N=100$ and compute the exact loglikelihood $L_\theta = \sum_{i=1}^N\log p_\theta(x_i)$.% and its gradient \wrt\ $\theta$.
 
  We compare the Langevin SIS estimator (L-MCVAE) of the log evidence $\log p_\theta(x)$ with Langevin auxiliary kernels as described in \Cref{subsec:LangevinELBO}, and the Langevin AIS estimate (A-MCVAE).
  Moreover, we also compare the gradients of these quantities \wrt~the parameters $\theta_0, \theta_1$.
  
  \Cref{fig:boxplot_ppca} summarises the results with boxplots computed over $200$ independent samples of each estimator. 
  The quantity reported on the first boxplot corresponds to the Monte Carlo samples of $\log \hat{p}_\theta(x) - \log p_\theta(x)$. One the one hand, we note that the SIS estimator has larger variance than AIS, and that the latter achieves a better ELBO. Moreover, in both cases, increasing the number of steps $K$ tightens the bound. On the other hand, the estimator of the gradient of AIS is noisier than that of SIS, even though variance reduction techniques allows us to recover a similar variance.
  We also present in the supplementary material the Langevin SIS estimator using auxiliary backward kernels learnt with neural networks (as done in previous contributions); see \Cref{spsec:exps_ppca}.
  The auxiliary neural backward kernels are set as $l(z, z') = \Normal(z'; \mu_\psi(z), \operatorname{diag}(\sigma^2_\psi(z)))$,  $\mu_\psi, \sigma_\psi\in\rset^d$, where the parameters $\psi$ are learnt through the SIS ELBO, similarly to \cite{huang:tan:lacoste:courville:2018}.
  The variance of the associated estimator and their gradients are larger than that of SIS using the approximate reversals as backward kernels; i.e. $\ell_{k-1}=m_k$.
  
  %***Maybe: Each of our model is computed for $K=2$ and $K=5$ steps -- in order to see that increasing K deteriorates variance for neural networks backward kernels***.
  %This aims at showing the superiority in flexibility and in form of the SMC enhanced Variational inference, which proves more efficient than IWAE algorithms, and more flexible directly than flows enhanced VI. 
%  \achille{input some graph with figures directly here ? when learning eta parameter, for example ? or like HVAE ?}
 % We also demonstrate this impression quantitatively by reproducing the experiment of~\cite{caterini:doucet:2018} ?.
  \begin{figure}[ht!]
    \centering
    \includegraphics[width = .85\linewidth]{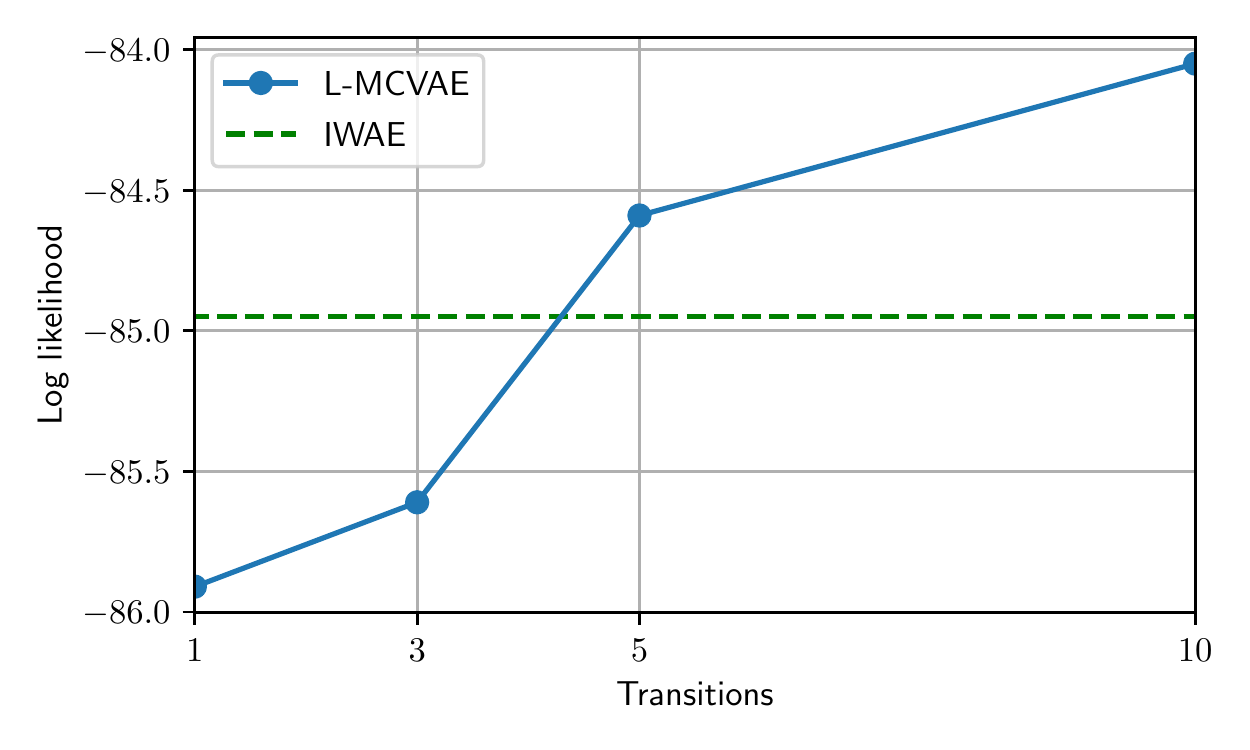}
    \caption{Log-likelihood of L-MCVAE depending on the number of Langevin steps $K$. Increasing $K$ improves performance, however at the expense of the computational complexity.}
    \label{fig:nll_k_lmcvae}
  \end{figure}
  \begin{figure}
    \centering
    \includegraphics[width = .85\linewidth]{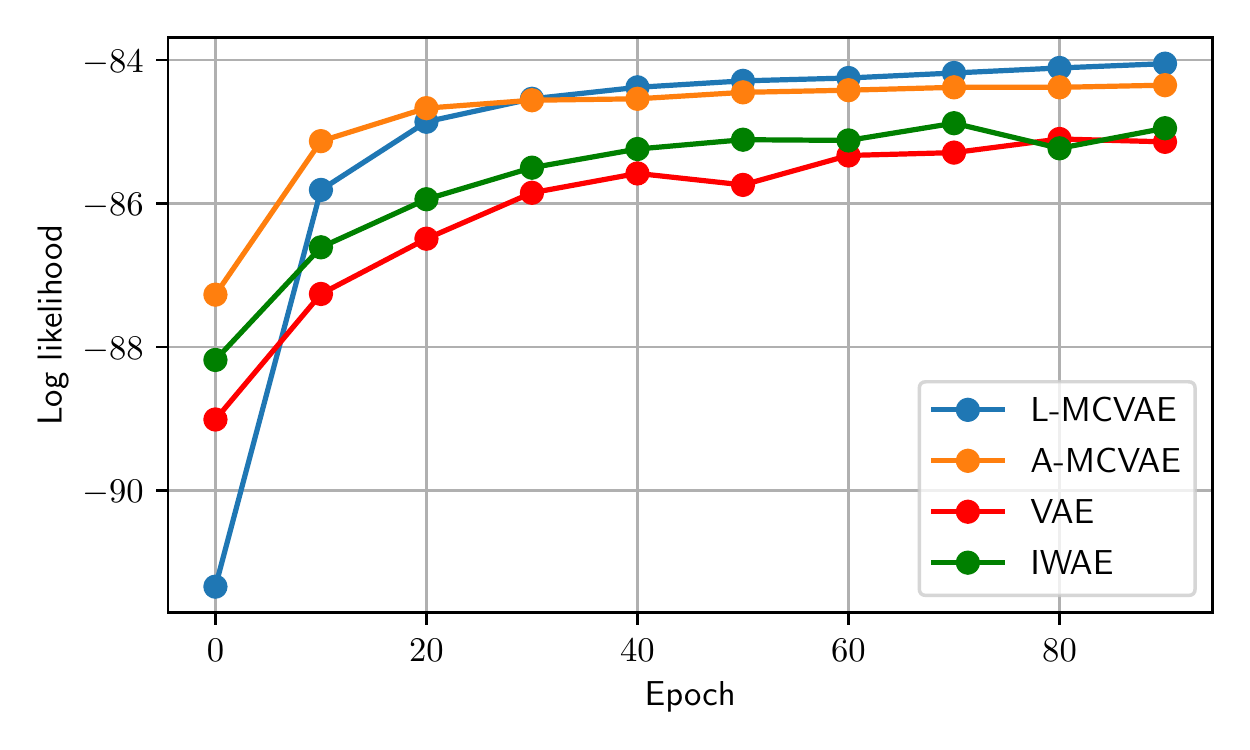}      \caption{Evolution of the held-out loglikelihood during training for A-MCVAE, L-MCVAE, IWAE and VAE on MNIST.% MCVAE train faster in terms of epochs.
    }
  \label{fig:nll_training}
  \end{figure}
\subsection{Numerical results for image datasets}
  \begin{table*}[ht!]
    \centering
    \caption{Results of the different models on MNIST. A more detailed version of this table is included in the supplementary material.}
    \label{tab:MNIST_1}
    \begin{tabular}{l|l|l|l||l|l|l|}
      %\cline{3-7}
      & \multicolumn{3}{c||}{\emph{negative ELBO estimate}} & \multicolumn{3}{c|}{\emph{NLL estimate}} \\ \hline
      \multicolumn{1}{|c|}{\emph{number of epochs}}  & \multicolumn{1}{r|}{\textit{10}} & \multicolumn{1}{r|}{\textit{30}} & \multicolumn{1}{r||}{\textit{100}} & \multicolumn{1}{r|}{\textit{10}} & \multicolumn{1}{r|}{\textit{30}} & \multicolumn{1}{r|}{\textit{100}}
      \\ \hline
      \multicolumn{1}{|l|}{VAE}               &   95.26 $\pm$ 0.49        &  91.58 $\pm$ 0.27         & 89.70 $\pm$ 0.19           &  89.83 $\pm$ 0.59         & 86.86 $\pm$ 0.26          & 85.22 $\pm$ 0.07          \\ \hline
      \multicolumn{1}{|l|}{IWAE, $K=10$}      & 91.42 $\pm$ 0.21 & 88.56 $\pm$ 0.07 & 87.16 $\pm$ 0.19         & 88.54 $\pm$ 0.27 &  86.07 $\pm$ 0.1 &  84.82 $\pm$ 0.1 \\
      \multicolumn{1}{|l|}{IWAE, $K=50$}  &    90.34 $\pm$ 0.27 &  87.5 $\pm$ 0.16 & 86.05 $\pm$ 0.11 &  89.4 $\pm$ 0.25 & 86.54 $\pm$ 0.15 &  85.05 $\pm$ 0.1 \\ \hline
      
     \multicolumn{1}{|l|}{L-MCVAE, $K=5$}    & 96.62 $\pm$ 3.24 & 88.58 $\pm$ 0.75 & 87.51 $\pm$ 0.41 & 90.59 $\pm$ 2.01 & 85.68 $\pm$ 0.49 & 84.92 $\pm$ 0.24 \\
     
      \multicolumn{1}{|l|}{L-MCVAE, $K=10$}   & 96.78 $\pm$ 1.06 & 87.99 $\pm$ 0.71 &  86.8 $\pm$ 0.66 & 91.33 $\pm$ 0.61 & 85.47 $\pm$ 0.46 & \textbf{84.58} $\pm$ 0.39 \\ \hline
      
      \multicolumn{1}{|l|}{A-MCVAE, $K=3$}   & 96.21 $\pm$ 3.43 & 88.64 $\pm$ 0.78 & 87.63 $\pm$ 0.42 & 90.42 $\pm$ 2.34 & 85.77 $\pm$ 0.65 & 85.02 $\pm$ 0.37 \\ 
      \multicolumn{1}{|l|}{A-MCVAE, $K=5$}    & 95.55 $\pm$ 2.96 & 87.99 $\pm$ 0.57 & 87.03 $\pm$ 0.27 & 90.39 $\pm$ 2.21 &  85.6 $\pm$ 0.67 & \textbf{84.84} $\pm$ 0.38 \\ \hline
      \multicolumn{1}{|l|}{VAE with RealNVP}    &  95.23 $\pm$ 0.33 &  91.69 $\pm$ 0.15 & 89.62 $\pm$ 0.17 & 89.98 $\pm$ 0.24 & 86.88 $\pm$ 0.05 &  85.23 $\pm$ 0.18 \\  \hline
      \end{tabular}
      \end{table*}
      \begin{table*}[ht!]
    \centering
    \caption{Results of the different models on CelebA. A more detailed version of this table is included in the supplementary material. 11400 must be added to all scores in this table.}
    \label{tab:CelebA_1}
    \begin{tabular}{l|l|l|l||l|l|l|}
      %\cline{3-7}
      & \multicolumn{3}{c||}{\emph{negative ELBO - }11400+} & \multicolumn{3}{c|}{\emph{NLL - }11400+} \\ \hline
      \multicolumn{1}{|c|}{\emph{number of epochs}}  & \multicolumn{1}{r|}{\textit{10}} & \multicolumn{1}{r|}{\textit{30}} & \multicolumn{1}{r||}{\textit{100}} & \multicolumn{1}{r|}{\textit{10}} & \multicolumn{1}{r|}{\textit{30}} & \multicolumn{1}{r|}{\textit{100}}
      \\ \hline
      \multicolumn{1}{|l|}{VAE}               & 23.78 $\pm$ 1.95 &  17.99 $\pm$ 0.4 & 14.72 $\pm$ 0.16 &  17.35 $\pm$ 1.7 & 12.68 $\pm$ 0.62 & 10.11 $\pm$ 0.32 \\ \hline
      \multicolumn{1}{|l|}{IWAE, $K=10$}      & 20.59 $\pm$ 0.71 & 15.45 $\pm$ 0.52 &   12.2 $\pm$ 0.3 &  18.25 $\pm$ 0.6 & 13.18 $\pm$ 0.42 & 10.14 $\pm$ 0.31 \\
      \multicolumn{1}{|l|}{IWAE, $K=50$}      & 19.05 $\pm$ 0.39 &  13.59 $\pm$ 0.5 & 10.48 $\pm$ 0.89 & 19.08 $\pm$ 0.42 & 13.17 $\pm$ 0.54 & 10.12 $\pm$ 0.86 \\ \hline
      \multicolumn{1}{|l|}{L-MCVAE, $K=5$}    & 21.61 $\pm$ 1.48 & 12.72 $\pm$ 0.43 &  11.6 $\pm$ 0.37 & 16.42 $\pm$ 1.47 & 9.62 $\pm$ 0.47 &  8.72 $\pm$ 0.4 \\
       \multicolumn{1}{|l|}{L-MCVAE, $K=10$}   &  20.7 $\pm$ 1.15 & 11.81 $\pm$ 0.34 &  10.6 $\pm$ 0.23 &  17.0 $\pm$ 1.87 & 9.29 $\pm$ 0.73 & \textbf{8.24} $\pm$ 0.52 \\ \hline
      \multicolumn{1}{|l|}{A-MCVAE, $K=3$}   &  21.59 $\pm$ 1.5 & 13.94 $\pm$ 0.42 &  12.84 $\pm$ 0.3 & 16.64 $\pm$ 1.37 & 10.98 $\pm$ 0.48 &  9.95 $\pm$ 0.3 \\ 
      \multicolumn{1}{|l|}{A-MCVAE, $K=5$}    & 20.95 $\pm$ 1.18 & 12.42 $\pm$ 0.42 & 11.13 $\pm$ 0.37 & 17.42 $\pm$ 1.49 & 9.97 $\pm$ 0.59 & \textbf{8.82} $\pm$ 0.57 \\ \hline
      \multicolumn{1}{|l|}{VAE with RealNVP}    &  15.12 $\pm$ 0.48 &  13.63 $\pm$ 0.27 & 12.58 $\pm$ 0.61 & 10.42 $\pm$ 0.33 & 9.04 $\pm$ 0.26 &  8.98 $\pm$ 0.2 \\    \hline
    \end{tabular}
  \end{table*}
   \begin{table*}[ht!]
    \centering
  \caption{Results of the different models on CIFAR. A more detailed version of this table is included in the supplementary material. 2800 must be added to all scores in this table.}
    \label{tab:CIFAR_1}
\begin{tabular}{l|l|l|l||l|l|l|}
      %\cline{3-7}
      & \multicolumn{3}{c||}{\emph{negative ELBO - }2800+} & \multicolumn{3}{c|}{\emph{NLL - }2800+} \\ \hline
      \multicolumn{1}{|c|}{\emph{number of epochs}}  & \multicolumn{1}{r|}{\textit{10}} & \multicolumn{1}{r|}{\textit{30}} & \multicolumn{1}{r||}{\textit{100}} & \multicolumn{1}{r|}{\textit{10}} & \multicolumn{1}{r|}{\textit{30}} & \multicolumn{1}{r|}{\textit{100}}\\
\hline
                          \multicolumn{1}{|l|}{          VAE} & 69.57 $\pm$ 0.08 & 69.55 $\pm$ 0.51 & 68.84 $\pm$ 0.06 & 68.51 $\pm$ 0.07 & 68.41 $\pm$ 0.33 &  67.9 $\pm$ 0.03 \\
                             \multicolumn{1}{|l|}{IWAE, K= 10} & 69.82 $\pm$ 0.03 & 69.35 $\pm$ 0.03 & 69.36 $\pm$ 0.36 & 68.56 $\pm$ 0.03 &  68.0 $\pm$ 0.03 &  68.02 $\pm$ 0.4 \\
                             \multicolumn{1}{|l|}{IWAE, K= 50} & 69.94 $\pm$ 0.08 & 69.55 $\pm$ 0.04 & 69.43 $\pm$ 0.03 & 69.15 $\pm$ 0.15 & 68.37 $\pm$ 0.18 & 67.93 $\pm$ 0.02 \\
                             \hline
           \multicolumn{1}{|l|}{L-MCVAE, K= 5} & 70.62 $\pm$ 0.41 & 68.55 $\pm$ 0.18 &  68.09 $\pm$ 0.1 & 69.15 $\pm$ 0.38 & 67.73 $\pm$ 0.07 &  \textbf{67.5} $\pm$ 0.07 \\
              \multicolumn{1}{|l|}{L-MCVAE, K= 10} & 70.99 $\pm$ 0.59 & 68.36 $\pm$ 0.04 &  68.03 $\pm$ 0.0 &  69.8 $\pm$ 0.67 & 67.76 $\pm$ 0.04 & \textbf{67.51} $\pm$ 0.03 \\
          \hline
       \multicolumn{1}{|l|}{A-MCVAE, K= 3} & 69.97 $\pm$ 0.99 & 68.48 $\pm$ 0.29 & 68.18 $\pm$ 0.16 & 69.26 $\pm$ 0.76 & 67.77 $\pm$ 0.18 &  67.55 $\pm$ 0.1 \\
\multicolumn{1}{|l|}{A-MCVAE, K= 5} &  70.1 $\pm$ 0.89 &  68.28 $\pm$ 0.2 & 68.01 $\pm$ 0.08 & 69.23 $\pm$ 0.75 & 67.71 $\pm$ 0.15 &  \textbf{67.5} $\pm$ 0.07 \\
\hline
 \multicolumn{1}{|l|}{VAE with RealNVP}%(5 flows)
 &  70.01 $\pm$ 0.12 &  69.51 $\pm$ 0.07 & 69.19 $\pm$ 0.13 & 68.73 $\pm$ 0.05 & 68.35 $\pm$ 0.05 &  68.05 $\pm$ 0.02 \\
\bottomrule
\end{tabular}
  \end{table*}
  Following~\cite{wu:burda:grosse:2016}, we propose to evaluate our models using AIS (not to be confused with the proposed AIS-based VI approach) to get an estimation of the negative log-likelihood.
  The base distribution is the distribution output by the encoder, and we perform $K$ steps of annealing to compute the estimator of the likelihood, as given by~\eqref{eq:SiS-estimator-Z}.
  In practice, we use $K = 5$ HMC steps with 3 leapfrogs for evaluating our models.%, with $N = 10$ samples per data object. 
  
  We evaluate our models on three different datasets: MNIST, CIFAR-10 and CelebA.
  All the models we compare share the same architecture: the inference network $q_\phi$ is given by a convolutional network with 8 convolutional layers and one linear layer, which outputs the parameters $\mu_\phi(x), \sigma_\phi(x)\in\rset^d$ of a factorized Gaussian distribution, while the generative model $p_\theta(\cdot | z)$ is given by another convolutional network $\pi_\theta$, where we use nearest neighbor upsamplings. This outputs the parameters for the factorized Bernoulli distribution (for MNIST dataset), that is
  $$
    %x | z\sim \prod_{i=1}^N\Ber\Bigl(x^{(i)} | (\pi_\theta(z))^{(i)}\Bigr)  
    p_\theta(x | z) = \prod_{i=1}^N \Ber\Bigl(x^{(i)} | \bigl(\pi_\theta(z)\bigr)^{(i)}\Bigr)  
 $$
 and similarly the mean of the Gaussian distributions for colored datasets (CIFAR-10, Celeba).
  %\[x| z\sim \prod_{i=1}^N\Normal(x^{(i)}|(\pi_\theta(z))^{(i)},\sigma^2 I)\]
  We compare A-MCVAE, L-MCVAE, IWAE, and VAE with different settings. 
  All the models are implemented using PyTorch~\cite{paszke2019pytorch} and optimized using the Adam optimizer~\cite{kingma2014adam} for 100 epochs each. The training process is using PyTorch Lightning toolkit~\cite{falcon2019pytorch}.%  For more details, we refer readers to our code, available on (link).
  
  First, consider dynamically binarized MNIST dataset~\cite{salakhutdinov2008quantitative}. In this case, the latent dimension is set to $d = 64$.
  We present in \Cref{tab:MNIST_1} the results of the different models at different stages of the optimization.
  Moreover, we show on \Cref{fig:nll_k_lmcvae} the performance of L-MCVAE for different values of $K$ compared to IWAE baseline. In particular, we see that increasing $K$ increases the performance of our VAE, however at the expense of an increase in computational cost. 
  We also display on \Cref{fig:nll_training} the evolution of the held-out loglikelihood for various objectives during training. Adding Langevin transitions appears to help convergence of the models.
  
  Second, we compare similarly the different models on CelebA and CIFAR, see \Cref{tab:CelebA_1} and \Cref{tab:CIFAR_1}. In this case, the latent dimension is chosen to be $d= 128$. 
  Increasing the number of MCMC steps seems again to improve both the ELBO and the final loglikelihood estimate. In each case, all models are run with 5 different seeds to compute the presented empirical standard deviation.%We display in the supplementary material \Cref{spsec:exps_celeb} the results on CIFAR dataset which show the same trend.

\section{Discussion}
%ARNAUD TO WRITE
%Points to make 
%*on a propose SIS et AIS, on recap un peu les contributions en 2 lignes.
% *   AIS is elegant but gradient estimates are noisy as based on score estimators.
% *   SIS with pre-set backward kernels is more efficient at fixed computational complexit
We have shown in this article how one can leverage state-of-the-art Monte Carlo estimators of the evidence to develop novel competitive VAEs by developing novel gradient estimates of the corresponding ELBOs. 

For a given computational complexity, AIS based on MALA provides ELBO estimates which are typically tighter than SIS estimates based on ULA.
However, the variance of the gradient estimates of the AIS-based ELBO (A-MCVAE) is also significantly larger than for the SIS-based ELBO (L-MCVAE) as it has to rely on REINFORCE gradient estimates. While control variates can be considered to reduce the variance, this comes at a significant increase in computational cost. 

Empirically, L-MCVAE should thus be favoured as it provides both a tighter ELBO than standard techniques and low variance gradient estimates.
%Thus, we practically recommend the use of L-MCVAE over A-MCVAE which practically provides both a tighter ELBO than standard techniques and low variance gradient estimates.

\section*{Acknowledgements}
The work was partly supported by ANR-19-CHIA-0002-01 ``SCAI'' and EPSRC CoSInES grant EP/R034710/1. It was partly carried out under the framework of HSE University Basic Research Program. The development of a software system for the experimental study of VAEs and its application to computer vision problems (Section 4) was supported by the Russian Science Foundation grant 20-71-10135. Part of this research has been carried out under the auspice of the Lagrange Center for Mathematics and Computing.

\clearpage
\newpage
\bibliographystyle{icml2021}
\bibliography{bibliography}

% Recommended, but optional, packages for figures and better typesetting:

% The \icmltitle you define below is probably too long as a header.
% Therefore, a short form for the running title is supplied here:
\icmltitlerunning{}

\setcounter{equation}{0}
\setcounter{figure}{0}
\setcounter{table}{0}
\setcounter{page}{1}
 \renewcommand{\theequation}{S\arabic{equation}}
 \renewcommand{\thefigure}{S\arabic{figure}}
 \renewcommand{\thetheorem}{S\arabic{theorem}}
 \renewcommand{\thelemma}{S\arabic{lemma}}
 \renewcommand{\theproposition}{S\arabic{proposition}}

\appendix
\onecolumn
\icmltitle{{Monte Carlo VAE \\
\normalsize SUPPLEMENTARY DOCUMENT}}

\section{Notations and definitions}
\label{sec:notations-supp}
  Let $(\Xset,\Xsigma)$ be a measurable space. A
  \emph{Markov kernel} $N$ on $\Xset \times \Xsigma$ is a mapping $N: \Xset \times \Xsigma \to \ccint{0,1}$ satisfying the following conditions:
\begin{enumerate}[label={(\roman*)}]
\item \label{item:def-kernel-measure} for every $x \in \Xset$, the mapping $N(x,\cdot): A \mapsto N(x, A)$ is a probability of  on $\Xsigma$,
\item \label{item:def-kernel-measurable} for every $A \in \Xsigma$, the mapping $N(\cdot,A): x\mapsto  N(x,A)$ is a measurable function from $(\Xset,\Xsigma)$ to $(\ccint{0,1},\mathcal{B}(\ccint{0,1})$, where $\mathcal{B}(\ccint{0,1})$ denotes the borelian sets of $\ccint{0,1}$.
\end{enumerate}
Let $\lambda$ be a positive $\sigma$-finite measure on $(\Xset,\Xsigma)$ and
  $n: \Xset \times \Xset \to \rset_+$ be a nonnegative function, measurable with
  respect to the product $\sigma$-field $\Xsigma \otimes \Xsigma$. Then, the
  application $N$ defined on $\Xset\times\Xsigma$ by
\[
N(x,A)\ =\int_{A} n(x, y) \lambda(\rmd y) \eqsp,
\]
is a kernel.  The function $n$ is called the density of the kernel $N$ \wrt\ the
measure $\lambda$. The kernel $N$ is Markovian if and only if $\int_{\Xset}
n(x,y)\lambda(\rmd y)=1$ for all $x\in\Xset$.

Let $N$ be a kernel on $\Xset \times \Xsigma$ and $f$ be a nonnegative function.  A function $Nf: \Xset \to
\rset_+$ is defined by setting, for  $x \in \Xset$,
\[
Nf(x) = \int_{\Xset} N(x, \rmd y)f(y) \eqsp.
\]
Let $\mu$ be a probability on $(\Xset,\Xsigma)$. For $A\in
\Xsigma$, define
\[
\mu N(A)=\int_{\Xset} \mu(\rmd x)\ N(x,\ A) \eqsp.
\]
If $N$ is Markovian, then $\mu N$ is a probability on $(\Xset,\Xsigma)$.

\section{Experiences}
\subsection{Toy example}
\label{spsec:exps_toy}
We first describe additional experiments on the toy dataset introduced in \Cref{subsec:toy_ex_ppca}.

Recall that we generate some \iid~data $x=(x_i)_{i=1}^N \in \rset^N$  from the \iid~latent variables $z =(z_i)_{i=1}^N \in \rset^{2N}$ as follows for $\eta >0$: $z_i \sim \Normal(0; \Id)$ and %$x = \eta \cdot (\|z\|^2 + \zeta) + \epsilon$, where $\epsilon \sim \Normal(0, \sigma^2)$.
 $x_i\mid z_i\sim \Normal(\eta \cdot (\|z_i\| + \zeta), \sigma^2)= p_\theta(x_i\mid z_i)$.

% We now implement the different models in order to estimate the true parameters $\eta, \zeta$.

 This example, presented for $z\in\rset^2$, easily extends to the case where $z$ lies in $\rset^d$, with $d$ increasing from 2 to 300.
We tackle here the problem at estimating the parameter $\theta = (\eta, \zeta)$ when $d$ varies.

We show in \Cref{fig:toy_example_estim} the error $\|\hat \theta - \theta\|^2$ for the different methods.
\begin{figure}[!ht]
     \centering
   \includegraphics[width=.49\linewidth]{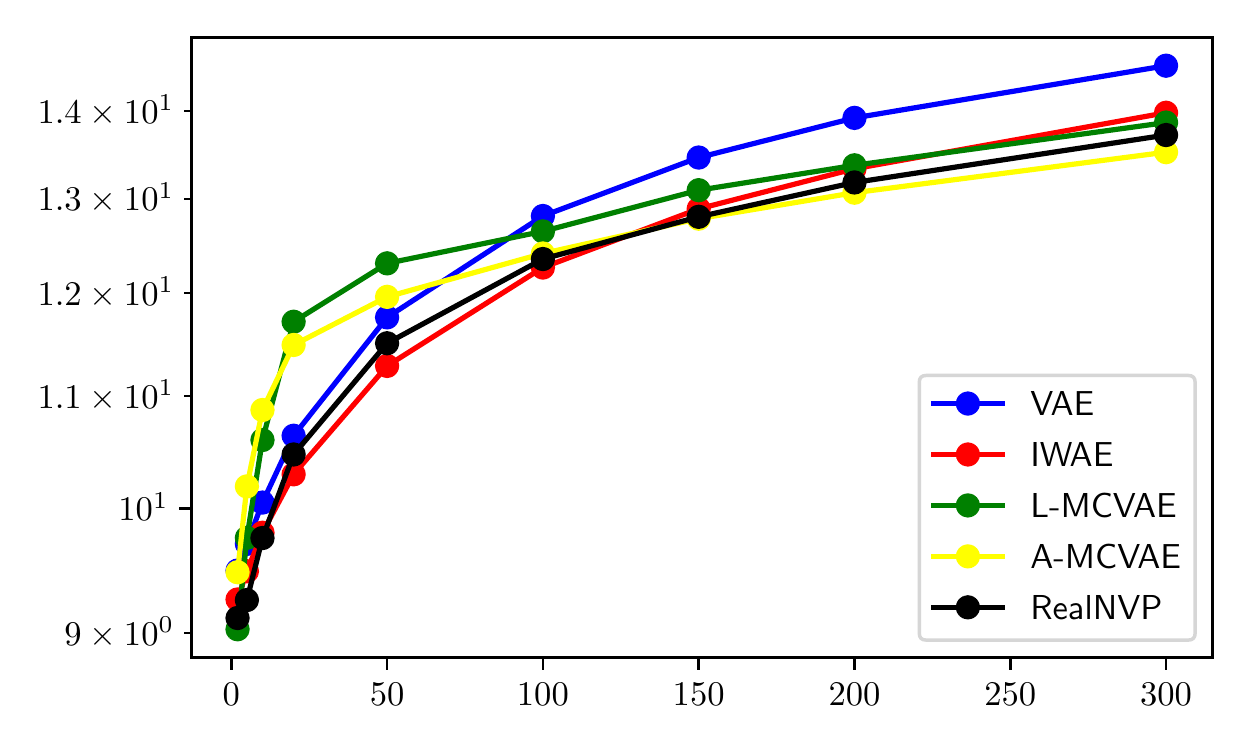}
     \caption{Squared error for parameter's estimates, obtained using different models.}
     \label{fig:toy_example_estim}
 \end{figure}
The increased flexibility of the posterior proves more effective for estimating the true parameters of the generative model.

\subsection{Probabilistic Principal Component Analysis}
\label{spsec:exps_ppca}
We detail the impact of the learnable reverse kernels on the variance of the estimator and looseness of the ELBO. In our experiments, reverse kernels were given by fully-connected neural networks. We train $K$ different reverse kernels $\{l_{k}\}_{k=0}^{K-1}$ for the $K$ transitions, each given by a separate neural network, and amortized over the observation $x$, similarly to \cite{salimans:kingma:welling:2015, huang:tan:lacoste:courville:2018}.
Given the parameters $(\theta, \phi)$, we train these kernels for a large number of epochs using the SIS objective \eqref{eq:elbo_sis} and the Adam optimizer \cite{kingma2014adam}.
  \begin{figure}[!ht]
     \centering
\begin{tabular}{cc}
   \includegraphics[width=.49\linewidth]{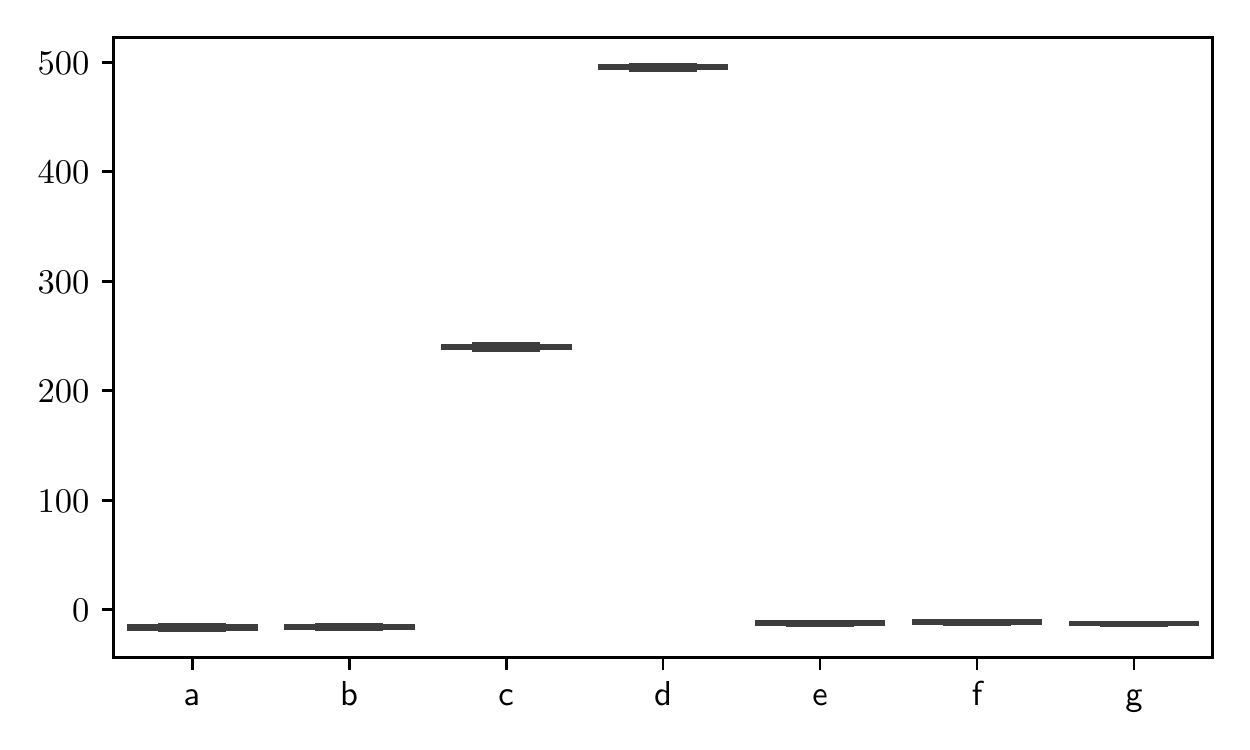}
   & \includegraphics[width=.49\linewidth]{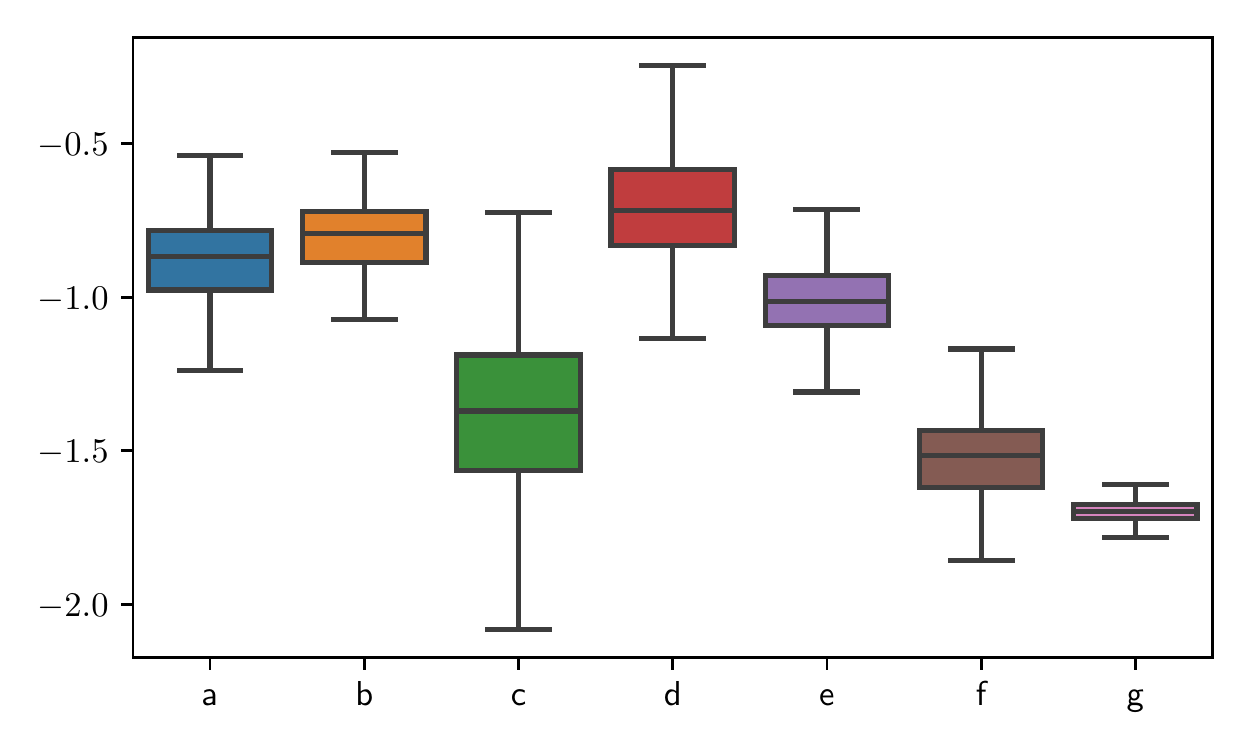}
\end{tabular}
     \caption{Representation of the different estimators (left) and their gradient (right) of the true log likelihood. From left to right, a/ L-MCVAE, $K=5$, b/ L-MCVAE, $K=10$, c/ L-MCVAE, $K=1$, learnable reverse, d/ L-MCVAE, $K=2$ learnable reverse, e/ A-MCVAE, $K=5$, f/ A-MCVAE, $K=10$, g/ A-MCVAE, $K=5$ with control variates.}
     \label{fig:rev_kernel_learnable}
 \end{figure}
In particular, we display in \Cref{fig:rev_kernel_learnable} the different estimators to be compared.
It is easily seen that reverse kernels can not provide reasonable and stable density estimates.
%, leading to higher gaps.
At the same time, we observe the variance of the gradient is higher in those models than in the ones we present in the main text. This motivates our approach bypassing the optimization of the reverse kernels.
%gradient variance is higher for that models.
\subsection{Additional experimental results}
\label{spsec:exps_celeb}
We display in this section the full results on MNIST, CelebA  and CIFAR respectively of the different models as well as the effect of the different annealing schemes  (respectively in \Cref{tab:MNIST_2}, \Cref{tab:CelebA_2}  and \ref{tab:CIFAR_2}).
 \begin{table*}[]
    \centering
    \caption{Results of the different models on MNIST with different annealing schemes.}
    \label{tab:MNIST_2}
\begin{tabular}{l|l|l|l|l|l|l}
\toprule
                      number of epoches &             \emph{ELBO: }  10 &               30 &              100 &           \emph{NLL: }    10 &               30 &              100 \\
\midrule
                                    VAE &  95.26 $\pm$ 0.5 & 91.58 $\pm$ 0.27 &  89.7 $\pm$ 0.19 & 89.83 $\pm$ 0.59 & 86.86 $\pm$ 0.26 & 85.22 $\pm$ 0.07 \\
                             IWAE, K= 10 & 91.42 $\pm$ 0.21 & 88.56 $\pm$ 0.07 & 87.17 $\pm$ 0.19 & 88.54 $\pm$ 0.27 &  86.07 $\pm$ 0.1 &  84.82 $\pm$ 0.1 \\
                             IWAE, K= 50 & 90.34 $\pm$ 0.27 &  87.5 $\pm$ 0.16 & 86.05 $\pm$ 0.11 &  89.4 $\pm$ 0.25 & 86.54 $\pm$ 0.15 &  85.05 $\pm$ 0.1 \\
                             \hline
                   L-MCVAE Fixed, K= 5 &  96.6 $\pm$ 3.51 &  88.8 $\pm$ 0.46 & 87.77 $\pm$ 0.12 & 90.63 $\pm$ 2.19 & 85.85 $\pm$ 0.27 & 85.07 $\pm$ 0.04 \\
               L-MCVAE Sigmoidal, K= 5 & 95.48 $\pm$ 2.29 & 88.87 $\pm$ 0.82 & 87.81 $\pm$ 0.53 & 90.05 $\pm$ 1.63 & 85.92 $\pm$ 0.62 & 85.16 $\pm$ 0.38 \\
           L-MCVAE All learnable, K= 5 & 96.62 $\pm$ 3.24 & 88.58 $\pm$ 0.75 & 87.51 $\pm$ 0.41 & 90.59 $\pm$ 2.01 & 85.68 $\pm$ 0.49 & 84.92 $\pm$ 0.24 \\
                  L-MCVAE Fixed, K= 10 & 95.98 $\pm$ 3.91 &  88.36 $\pm$ 0.7 & 87.38 $\pm$ 0.35 &  90.5 $\pm$ 2.23 & 85.75 $\pm$ 0.33 &  85.0 $\pm$ 0.11 \\
              L-MCVAE Sigmoidal, K= 10 & 96.78 $\pm$ 0.47 & 88.35 $\pm$ 0.63 & 87.17 $\pm$ 0.52 & 91.13 $\pm$ 0.27 & 85.72 $\pm$ 0.31 & 84.84 $\pm$ 0.26 \\
          L-MCVAE All learnable, K= 10 & 96.78 $\pm$ 1.06 & 87.99 $\pm$ 0.71 &  86.8 $\pm$ 0.66 & 91.33 $\pm$ 0.61 & 85.47 $\pm$ 0.46 & 84.58 $\pm$ 0.39 \\
          \hline
       A-MCVAE Fixed, K= 3 & 96.21 $\pm$ 3.43 & 88.64 $\pm$ 0.78 & 87.63 $\pm$ 0.42 & 90.42 $\pm$ 2.34 & 85.77 $\pm$ 0.65 & 85.02 $\pm$ 0.37 \\
    A-MCVAE Sigmoidal, K= 3 & 96.59 $\pm$ 2.31 &  88.96 $\pm$ 0.4 & 87.86 $\pm$ 0.06 & 90.85 $\pm$ 1.62 & 85.97 $\pm$ 0.34 &  85.17 $\pm$ 0.1 \\
A-MCVAE All learnable, K= 3 & 95.44 $\pm$ 2.68 & 88.79 $\pm$ 0.63 & 87.78 $\pm$ 0.37 &  89.9 $\pm$ 1.68 & 85.96 $\pm$ 0.59 & 85.23 $\pm$ 0.41 \\
       A-MCVAE Fixed, K= 5 & 95.55 $\pm$ 2.96 & 87.99 $\pm$ 0.57 & 87.03 $\pm$ 0.27 & 90.39 $\pm$ 2.21 &  85.6 $\pm$ 0.67 & 84.84 $\pm$ 0.38 \\
    A-MCVAE Sigmoidal, K= 5 & 96.56 $\pm$ 2.02 & 88.51 $\pm$ 0.31 & 87.46 $\pm$ 0.48 & 91.62 $\pm$ 1.55 & 85.96 $\pm$ 0.06 & 85.15 $\pm$ 0.21 \\
A-MCVAE All learnable, K= 5 & 95.81 $\pm$ 1.72 & 88.11 $\pm$ 0.13 & 87.14 $\pm$ 0.18 & 90.79 $\pm$ 1.14 & 85.71 $\pm$ 0.28 & 84.95 $\pm$ 0.04 \\
\hline
 VAE with RealNVP %(5 flows)
 &  95.23 $\pm$ 0.33 &  91.69 $\pm$ 0.15 & 89.62 $\pm$ 0.17 & 89.98 $\pm$ 0.24 & 86.88 $\pm$ 0.05 &  85.23 $\pm$ 0.18 \\
\bottomrule
\end{tabular}
  \end{table*}
\begin{table*}[]
    \centering
    \caption{Full results of the different models on CelebA. All scores must be added 11400 in this table.}
    \label{tab:CelebA_2}
\begin{tabular}{l|l|l|l|l|l|l}
\toprule
                      number of epoches & \emph{ELBO: }                  10 &                  30 &                 100 &   \emph{NLL: }               10 &                  30 &                 100 \\
\midrule
                                    VAE & 23.78 $\pm$ 1.95 &  17.99 $\pm$ 0.4 & 14.72 $\pm$ 0.16 &  17.35 $\pm$ 1.7 & 12.68 $\pm$ 0.62 & 10.11 $\pm$ 0.32 \\
                             IWAE, K= 10 & 20.59 $\pm$ 0.71 & 15.45 $\pm$ 0.52 &   12.2 $\pm$ 0.3 &  18.25 $\pm$ 0.6 & 13.18 $\pm$ 0.42 & 10.14 $\pm$ 0.31 \\
                             IWAE, K= 50 & 19.05 $\pm$ 0.39 &  13.59 $\pm$ 0.5 & 10.48 $\pm$ 0.89 & 19.08 $\pm$ 0.42 & 13.17 $\pm$ 0.54 & 10.12 $\pm$ 0.86 \\
                             \hline
                   L-MCVAE Fixed, K= 5 & 21.93 $\pm$ 1.34 & 13.12 $\pm$ 1.27 & 12.03 $\pm$ 1.21 & 16.65 $\pm$ 1.55 & 10.12 $\pm$ 1.38 & 9.14 $\pm$ 1.27 \\
               L-MCVAE Sigmoidal, K= 5 & 21.61 $\pm$ 1.48 & 12.72 $\pm$ 0.43 &  11.6 $\pm$ 0.37 & 16.42 $\pm$ 1.47 & 9.62 $\pm$ 0.47 &  8.72 $\pm$ 0.4 \\
           L-MCVAE All learnable, K= 5 & 20.75 $\pm$ 0.65 &  12.99 $\pm$ 0.7 & 11.91 $\pm$ 0.61 & 16.16 $\pm$ 0.93 & 10.01 $\pm$ 0.72 & 9.03 $\pm$ 0.64 \\
                  L-MCVAE Fixed, K= 10 & 21.49 $\pm$ 0.03 & 12.83 $\pm$ 0.57 & 11.76 $\pm$ 0.56 & 17.67 $\pm$ 0.75 &  10.26 $\pm$ 0.9 & 9.24 $\pm$ 0.79 \\
              L-MCVAE Sigmoidal, K= 10 & 19.44 $\pm$ 0.82 & 11.81 $\pm$ 0.45 &   10.7 $\pm$ 0.4 & 15.67 $\pm$ 1.48 &  9.24 $\pm$ 0.8 & 8.24 $\pm$ 0.73 \\
          L-MCVAE All learnable, K= 10 &  20.7 $\pm$ 1.15 & 11.81 $\pm$ 0.34 &  10.6 $\pm$ 0.23 &  17.0 $\pm$ 1.87 & 9.29 $\pm$ 0.73 & 8.26 $\pm$ 0.52 \\
          \hline
       A-MCVAE Fixed, K= 3 &  21.59 $\pm$ 1.5 & 13.94 $\pm$ 0.42 &  12.84 $\pm$ 0.3 & 16.64 $\pm$ 1.37 & 10.98 $\pm$ 0.48 &  9.95 $\pm$ 0.3 \\
    A-MCVAE Sigmoidal, K= 3 & 23.63 $\pm$ 1.19 & 14.17 $\pm$ 0.26 & 12.96 $\pm$ 0.18 &  18.0 $\pm$ 0.54 &  11.09 $\pm$ 0.2 & 10.11 $\pm$ 0.13 \\
A-MCVAE All learnable, K= 3 & 22.11 $\pm$ 1.66 & 14.62 $\pm$ 0.35 & 13.54 $\pm$ 0.18 & 17.38 $\pm$ 1.54 & 11.68 $\pm$ 0.33 & 10.67 $\pm$ 0.16 \\
       A-MCVAE Fixed, K= 5 & 20.13 $\pm$ 1.11 & 13.11 $\pm$ 0.38 & 11.99 $\pm$ 0.56 & 16.71 $\pm$ 1.47 & 10.64 $\pm$ 0.24 & 9.63 $\pm$ 0.32 \\
    A-MCVAE Sigmoidal, K= 5 & 20.95 $\pm$ 1.18 & 12.42 $\pm$ 0.42 & 11.13 $\pm$ 0.37 & 17.42 $\pm$ 1.49 & 9.97 $\pm$ 0.59 & 8.82 $\pm$ 0.57 \\
A-MCVAE All learnable, K= 5 & 22.17 $\pm$ 0.17 & 12.73 $\pm$ 0.09 & 11.46 $\pm$ 0.15 & 18.97 $\pm$ 1.04 & 10.41 $\pm$ 0.28 & 9.22 $\pm$ 0.16 \\
\hline
 VAE with RealNVP %(5 flows)
 &  15.56 $\pm$ 0.29 &  13.60 $\pm$ 0.35 & 12.21 $\pm$ 0.27 & 10.69 $\pm$ 0.19 & 9.09 $\pm$ 0.26 &  8.98 $\pm$ 0.2 \\
\bottomrule
\end{tabular}
  \end{table*}
  \begin{table*}[]
    \centering
    \caption{Results of the different models on CIFAR-10 with different annealing schemes. All scores must be added 2800 in this table.}
    \label{tab:CIFAR_2}
\begin{tabular}{l|l|l|l|l|l|l}
\toprule
                      number of epoches &             \emph{ELBO: }    10 &                 30 &                100 &       \emph{NLL: }          10 &                 30 &                100 \\
\midrule
                                    VAE & 69.57 $\pm$ 0.08 & 69.55 $\pm$ 0.51 & 68.84 $\pm$ 0.06 & 68.51 $\pm$ 0.07 & 68.41 $\pm$ 0.33 &  67.9 $\pm$ 0.03 \\
                             IWAE, K= 10 & 69.82 $\pm$ 0.03 & 69.35 $\pm$ 0.03 & 69.36 $\pm$ 0.36 & 68.56 $\pm$ 0.03 &  68.0 $\pm$ 0.03 &  68.02 $\pm$ 0.4 \\
                             IWAE, K= 50 & 69.94 $\pm$ 0.08 & 69.55 $\pm$ 0.04 & 69.43 $\pm$ 0.03 & 69.15 $\pm$ 0.15 & 68.37 $\pm$ 0.18 & 67.93 $\pm$ 0.02 \\
                             \hline
                   L-MCVAE Fixed, K= 5 & 70.86 $\pm$ 0.53 & 68.44 $\pm$ 0.18 & 68.12 $\pm$ 0.11 & 69.37 $\pm$ 0.37 &  67.78 $\pm$ 0.1 & 67.53 $\pm$ 0.07 \\
               L-MCVAE Sigmoidal, K= 5 &  70.9 $\pm$ 0.59 & 68.46 $\pm$ 0.13 & 68.12 $\pm$ 0.11 & 69.42 $\pm$ 0.39 & 67.77 $\pm$ 0.11 & 67.51 $\pm$ 0.08 \\
           L-MCVAE All learnable, K= 5 & 70.62 $\pm$ 0.41 & 68.55 $\pm$ 0.18 &  68.09 $\pm$ 0.1 & 69.15 $\pm$ 0.38 & 67.73 $\pm$ 0.07 &  67.5 $\pm$ 0.07 \\
                  L-MCVAE Fixed, K= 10 & 70.67 $\pm$ 0.42 & 68.37 $\pm$ 0.06 & 69.07 $\pm$ 1.49 & 69.62 $\pm$ 0.54 & 67.78 $\pm$ 0.06 & 67.51 $\pm$ 0.03 \\
              L-MCVAE Sigmoidal, K= 10 & 70.99 $\pm$ 0.59 & 68.36 $\pm$ 0.04 &  68.03 $\pm$ 0.0 &  69.8 $\pm$ 0.67 & 67.76 $\pm$ 0.04 & 67.51 $\pm$ 0.03 \\
          L-MCVAE All learnable, K= 10 & 71.19 $\pm$ 0.79 & 68.36 $\pm$ 0.03 & 68.01 $\pm$ 0.04 & 69.95 $\pm$ 0.62 & 67.78 $\pm$ 0.07 &  67.5 $\pm$ 0.05 \\
          \hline
       A-MCVAE Fixed, K= 3 & 69.97 $\pm$ 0.99 & 68.48 $\pm$ 0.29 & 68.18 $\pm$ 0.16 & 69.26 $\pm$ 0.76 & 67.77 $\pm$ 0.18 &  67.55 $\pm$ 0.1 \\
    A-MCVAE Sigmoidal, K= 3 &  70.5 $\pm$ 1.18 & 68.45 $\pm$ 0.28 & 68.19 $\pm$ 0.18 &  69.18 $\pm$ 0.8 & 67.77 $\pm$ 0.19 & 67.56 $\pm$ 0.11 \\
A-MCVAE All learnable, K= 3 & 70.69 $\pm$ 1.23 &  68.44 $\pm$ 0.3 & 68.17 $\pm$ 0.18 & 69.36 $\pm$ 0.89 &  67.76 $\pm$ 0.2 & 67.55 $\pm$ 0.11 \\
       A-MCVAE Fixed, K= 5 & 70.37 $\pm$ 1.04 & 68.31 $\pm$ 0.21 &  68.04 $\pm$ 0.1 & 69.36 $\pm$ 0.87 & 67.73 $\pm$ 0.17 & 67.51 $\pm$ 0.08 \\
    A-MCVAE Sigmoidal, K= 5 & 70.89 $\pm$ 0.38 &  68.4 $\pm$ 0.05 & 68.07 $\pm$ 0.04 & 69.71 $\pm$ 0.33 &  67.8 $\pm$ 0.04 & 67.53 $\pm$ 0.02 \\
A-MCVAE All learnable, K= 5 &  70.1 $\pm$ 0.89 &  68.28 $\pm$ 0.2 & 68.01 $\pm$ 0.08 & 69.23 $\pm$ 0.75 & 67.71 $\pm$ 0.15 &  67.5 $\pm$ 0.07 \\
\hline
 VAE with RealNVP %(5 flows)
 &  70.01 $\pm$ 0.12 &  69.51 $\pm$ 0.07 & 69.19 $\pm$ 0.13 & 68.73 $\pm$ 0.05 & 68.35 $\pm$ 0.05 &  68.05 $\pm$ 0.02 \\
\bottomrule
\end{tabular}
  \end{table*}
\section{Proofs}
\label{spsec:proofs}
\subsection{Proof of SIS and AIS Identities}
%Recall the tempering scheme $(\beta_t)_{t=0}^{K}$, with $0=\beta_0<\beta_1\leq\dots\leq\beta_{K}=1$, and the successive unnormalized target densities $\unnormdistr_t = m^{1-\beta_t} \unnormdistr^{\beta_t }$.
\begin{proposition}
Let $\{ \Gamma_k \}_{k=0}^K$ be a sequence of distributions on $(\rset^d, \mcb{\rset^d})$, $\{M_k\}_{k=1}^K$ and $\{ L_k \}_{k=0}^{K-1}$ be Markov kernels. Assume that for each $k \in \{0,\dots,K-1\}$, there exists a positive measurable function $w_k : \rset^d \times \rset^d \ \mapsto \rset_+$ such that
\begin{equation}
\label{eq:condition}
\Gamma_k(\rmd z_k) L_{k-1}(z_k, \rmd z_{k-1}) = \Gamma_{k-1}(\rmd z_{k-1}) M_k(z_{k-1}, \rmd z_k) w_k(z_{k-1},z_k)  \eqsp.
\end{equation}
Then,
\begin{equation}
\label{eq:smc_identity}
\Gamma_0(\rmd z_0) \prod_{k=1}^K M_k(z_{k-1},\rmd z_k) \prod_{k=1}^K w_k(z_{k-1},z_k)=   \Gamma_K(\rmd z_K) \prod_{k=K}^{1} L_{k-1}(z_k, \rmd z_{k-1})\eqsp.
\end{equation}
\end{proposition}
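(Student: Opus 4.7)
The plan is to prove the identity by a telescoping induction on $K$, applying the hypothesis \eqref{eq:condition} one step at a time to ``flip'' each forward kernel $M_k$ into the corresponding reversal $L_{k-1}$, at the cost of incorporating the weight $w_k$ and shifting the reference distribution from $\Gamma_{k-1}$ to $\Gamma_k$.

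First I would set up the base case $K=1$, which is literally the assumption \eqref{eq:condition} specialized to $k=1$. For the inductive step, starting from the left-hand side, I would isolate the first factor $\Gamma_0(\rmd z_0) M_1(z_0, \rmd z_1) w_1(z_0, z_1)$ and rewrite it as $\Gamma_1(\rmd z_1) L_0(z_1, \rmd z_0)$ using \eqref{eq:condition} with $k=1$. The remaining expression now reads
\[
\Gamma_1(\rmd z_1) L_0(z_1,\rmd z_0)\prod_{k=2}^K M_k(z_{k-1},\rmd z_k)\prod_{k=2}^K w_k(z_{k-1},z_k),
\]
and I would then apply \eqref{eq:condition} with $k=2$ to the factor $\Gamma_1(\rmd z_1) M_2(z_1, \rmd z_2) w_2(z_1, z_2)$, producing $\Gamma_2(\rmd z_2) L_1(z_2, \rmd z_1)$. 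Iterating this ``swap'' $K$ times, each step peels off one $(M_k, w_k)$ pair and replaces it by $L_{k-1}$ while advancing the index of $\Gamma$, leaving after $K$ steps exactly $\Gamma_K(\rmd z_K)\prod_{k=K}^{1} L_{k-1}(z_k,\rmd z_{k-1})$, which is the right-hand side.

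Formally, this can be packaged as a finite induction: the induction hypothesis at stage $j\in\{0,\dots,K\}$ is that the left-hand side equals
\[
\Gamma_j(\rmd z_j)\Bigl(\prod_{k=j}^{1} L_{k-1}(z_k,\rmd z_{k-1})\Bigr)\Bigl(\prod_{k=j+1}^{K} M_k(z_{k-1},\rmd z_k)\Bigr)\Bigl(\prod_{k=j+1}^{K} w_k(z_{k-1},z_k)\Bigr),
\]
with $j=0$ being the starting expression and $j=K$ being the target. The step from $j$ to $j+1$ uses \eqref{eq:condition} with $k=j+1$ on the unique factor involving $z_{j+1}$ coming from the forward chain.

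The only real subtlety is measure-theoretic: the equality \eqref{eq:condition} is between measures on $\rset^d\times\rset^d$, and at each step of the induction I need to interpret the product of kernels and densities as a genuine measure on the path space and invoke Fubini/Tonelli to locally substitute the identity inside the product of kernels. Since all weights $w_k$ are nonnegative and measurable and the kernels are Markov, this substitution is straightforward and the whole manipulation can be checked by testing against an arbitrary nonnegative measurable function $f(z_0,\dots,z_K)$. No additional difficulty arises at the endpoints because the inductive rewriting respects the order in which the kernels are composed.
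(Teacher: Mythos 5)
Your proof is correct and follows essentially the same route as the paper's: a finite induction in which each step applies the assumption \eqref{eq:condition} once to exchange a factor $\Gamma_{j}(\rmd z_{j})M_{j+1}(z_{j},\rmd z_{j+1})w_{j+1}(z_{j},z_{j+1})$ for $\Gamma_{j+1}(\rmd z_{j+1})L_{j}(z_{j+1},\rmd z_{j})$. The only difference is bookkeeping --- you carry the not-yet-converted forward factors along as spectators and work from the left-hand side, while the paper states the partial identity for the first $k$ kernels and verifies the inductive step starting from the right-hand side --- which is an equivalent organization of the same argument.
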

\begin{proof}
We prove by induction that for $k \in \{1,\dots,K\}$,
\begin{equation}
\label{eq:induction}
\Gamma_0(\rmd z_0) \prod_{i=1}^k M_i(z_{i-1},\rmd z_i) \prod_{i=1}^k w_i(z_{i-1},z_i)=   \Gamma_k(\rmd z_k) \prod_{i=k}^{1} L_{i-1}(z_i, \rmd z_{i-1})\eqsp.
\end{equation}
Eq. \eqref{eq:induction} is satisfied for $k=1$ by \eqref{eq:condition}. Assume that \eqref{eq:induction} is satisfied for $k \leq K-1$. By \eqref{eq:condition},
\begin{align*}
\Gamma_{k+1}(\rmd z_{k+1}) \prod_{i=k+1}^{1} L_{i-1}(z_i, \rmd z_{i-1})
&= \Gamma_{k+1}(\rmd z_{k+1}) L_{k}(z_{k+1}, \rmd z_{k}) \prod_{i=k}^{1} L_{i-1}(z_i, \rmd z_{i-1}) \\
&=  \Gamma_{k}(\rmd z_{k}) M_{k+1}(z_{k}, \rmd z_{k+1}) w_{k+1}(z_{k},z_{k+1})  \prod_{i=k}^{1} L_{i-1}(z_i, \rmd z_{i-1}) \\
&=  M_{k+1}(z_{k}, \rmd z_{k+1}) w_{k+1}(z_{k},z_{k+1}) \Gamma_0(\rmd z_0) \prod_{i=1}^k M_i(z_{i-1},\rmd z_i) \prod_{i=1}^k w_i(z_{i-1},z_i)
\end{align*}
which concludes the proof.
\end{proof}
We now highlight conditions under which \eqref{eq:condition} is satisfied.
\begin{enumerate}
\item Assume that $\{ \Gamma_k \}_{k=0}^K$ have positive densities \wrt\ to the Lebesgue measure, \ie\ $\Gamma_k(\rmd z_k) = \Gamma_k(z_k ) \rmd z_k$ and that the kernels $\{ M_k \}_{k=1}^K$ and $\{ L_k \}_{k=0}^{K-1}$ have positive transition densities
$M_k(z_{k-1},\rmd z_k)= m_k(z_{k-1},z_k) \rmd z_k$ and $L_{k-1}(z_k,\rmd z_{k-1})= \ell_{k-1}(z_{k},z_{k-1}) \rmd z_{k-1}$, $k \in \{1,\dots,K\}$. Then,
\[
w_k(z_{k-1},z_k)= \frac{\gamma_k(z_k) \ell_{k-1}(z_k, z_{k-1})}{\gamma_{k-1}(z_{k-1}) m_k(z_{k-1}, z_k)}
\]
\item Assume that for $k \in \{1,\dots,K\}$,  $\Gamma_k(\rmd z_{k-1}) M_k(z_{k-1},\rmd z_k)= \Gamma_k(\rmd z_k) L_{k-1}(z_k, \rmd z_{k-1})$, and that there exists a positive measurable function such that $\Gamma_k(\rmd z_{k-1}) = \tilde{w}_k(z_{k-1}) \Gamma_{k-1}(\rmd z_{k-1})$. Then,
\begin{align*}
\Gamma_k(\rmd z_k) L_{k-1}(z_k, \rmd z_{k-1})= \Gamma_k(\rmd z_{k-1}) M_k(z_{k-1}, \rmd z_{k})= \tilde{w}_k(z_{k-1}) \Gamma_{k-1}(\rmd z_{k-1}) M_k(z_{k-1},\rmd z_k) \eqsp.
\end{align*}
Hence, \eqref{eq:condition} is satisfied with $w_k(z_{k-1},z_k)= \tilde{w}_k(z_{k-1})$. In particular, if for all $k \in \{0,\dots,K\}$, $\Gamma_k(z_k)= \gamma_k(z_k) \rmd z_k$, where $\gamma_k$ is a positive p.d.f., then $\tilde{w}_k(z_k)= \gamma_k(z_k)/ \gamma_{k-1}(z_{k-1})$.
\item Assume that for $k \in \{1,\dots,K\}$, $M_k$ is reversible \wrt\ $\Gamma_k$, \ie\ $\Gamma_k(\rmd z_{k-1}) M_k(z_{k-1},\rmd z_k)= \Gamma_k(\rmd z_k) M_k(z_k, \rmd z_{k-1})$, and that there exists a positive measurable function such that $\Gamma_k(\rmd z_{k-1}) = \tilde{w}_k(z_{k-1}) \Gamma_{k-1}(\rmd z_{k-1})$. Then, setting $L_{k-1}= M_k$, \eqref{eq:condition} is satisfied.
\end{enumerate}

\subsection{Proof of \eqref{eq:elbo_sis}}

For $k\in\{1,\dots, K\}$, $z_{k-1}\in\rset^d$, denote by $G_{k, z_{k-1}}$ the mapping $u_k\mapsto \propmap{u_k}(z_{k-1})$.
Our derivation below rely on the fact that for $k\in\{1,\dots, K\}$, $z_{k-1}\in\rset^d$, $G_{k, z_{k-1}}$ is a $\rmc^1$-diffeomorphism. This is the case for the Langevin mappings.
Note, similarly to the density considered in \Cref{sec:ais}, that $m_k(z_{k-1}, z_k) = \varphi(G_{k, z_{k-1}}^{-1}(z_k)) \Jac_{G_{k, z_{k-1}}^{-1}}(z_k)$.
When $K=1$, we have
  %and where we consider Langevin transitions:
  \begin{align*}
  \nonumber
    \int \log \bigl(w_{1} (z_{0}, z_1)\bigr) \initdistr^1(\chunk{z}{0}{1}\mid x) \rmd \chunk{z}{0}{1} &= \int \log \bigl(w_{1} (z_{0}, z_1)\bigr) \initdistr(z_0\mid x)\Jac_{G^{-1}_{1, z_{0}}}(z_1) \varphi\bigl(G^{-1}_{1, z_{0}}(z_1)\bigr)  \rmd \chunk{z}{0}{1}\\
    &= \int \log \bigl(w_{1} (z_{0}, \propmap{1, u_1}(z_0))\bigr) \initdistr(z_0 \mid x) \varphi(u_1) \rmd z_0 \rmd u_1\eqsp,
  \end{align*}
  where we have performed the change of variables $u_1 = G^{-1}_{1, z_{0}}(z_1)$, hence $z_1 = G_{1, z_{0}}(u_1) = \propmap{1, u_1}(z_0)$. Let now $K$ be in $\nsets$.
In general, we write
\begin{align*}
    \elbosmc &=   \int_{} \log \left(\prod_{k = 1}^K w_{k} (z_{k - 1}, z_k)\right) \initdistr^K(\chunk{z}{0}{K}\mid x) \rmd \chunk{z}{0}{K} =  \int_{} \log \left(\prod_{k = 1}^K w_{k} (z_{k - 1}, z_k)\right) \initdistr(z_0\mid x) \prod_{k=1}^K m_k(z_{k-1}, z_k)\rmd \chunk{z}{0}{K-1} \rmd z_K\\
    &= \int_{} \log \left(\prod_{k = 1}^K w_{k} (z_{k - 1}, z_k)\right) \initdistr(z_0\mid x) \prod_{k=1}^{K-1} m_k(z_{k-1}, z_k)\varphi(G_{K, z_{K-1}}^{-1}(z_K)) \Jac_{G_{K, z_{K-1}}^{-1}}(z_K)\rmd \chunk{z}{0}{K-1} \rmd z_K \\
    &=  \int_{} \log \left(\prod_{k = 1}^{K-1} w_{k} (z_{k - 1}, z_k)w_{K} (z_{K - 1}, \propmap{u_K}(z_{K-1}))\initdistr(z_0\mid x)\right) \prod_{k=1}^{K-1} m_k(z_{k-1}, z_k) \varphi(u_K)\rmd \chunk{z}{0}{K-1}\rmd u_K
\end{align*}
using the change of variables $u_K =G_{K, z_{K-1}}^{-1}(z_K) $.
By an immediate backwards induction, we write
\begin{equation*}
    \elbosmc = \int_{} \log\left( \prod_{k=1}^{K}  w_{k}(\Circ{i = 1}{k-1} \propmap{i, u_i} (z_0),\Circ{i = 1}{k} \propmap{i, u_i} (z_0))\right)\initdistr(z_0\mid x) \varphi(\chunk{u}{1}{K})\rmd z_0\rmd \chunk{u}{1}{K}\eqsp.
\end{equation*}
\subsection{Proof of \Cref{lem:langevin_map_diff}}
    Let $\eta <\lipcon^{-1}$ and  $u \in \rset^D$.
    First we show that $\propmap{u}^{\MALA}$ is invertible.
    Consider, for each $(y,u) \in \rset^{2d}$, the mapping $H_{y,u}(z) = y - \sqrt{2\eta} u - \eta \nabla \log\pi(z)$.
    We have, for $z_1, z_2 \in \rset^d$,
    \begin{equation*}
    \label{eq:H_contracting}
      \|H_{y,u}(z_1) - H_{y,u}(z_2)\| \leq  \eta \|\nabla \log\pi(z_1) - \nabla \log\pi(z_2)\| \leq \eta \lipcon \|z_1-z_2\|
    \end{equation*}
    and $ \eta \lipcon < 1$. Hence $H_{y,u}$ is a contraction mapping and thus has a unique fixed point $z_{y,u}$. Hence, for all $(y,u) \in \rset^{2d}$ there exists a unique $z_{y,u}$ satisfying
    \begin{equation*}
    \label{eq:Invertibility_MALA}
      H_{y,u}(z_{y,u}) = z_{y,u} \Rightarrow y = z_{y,u} + \eta \nabla\log \pi(z_{y,u}) + \sqrt{2\eta}u = \propmap{u}^{\MALA}(z_{y,u}).
    \end{equation*}
  This establishes the  invertibility of $\propmap{u}^{\MALA}$. The fact that the inverse of $\propmap{u}^{\MALA}$ is $\rmC^1$ follows from a simple application of the local inverse function theorem.

%\subsection{Proof of \Cref{lemma:reparam_markov}}
%\begin{proof}
%Consider a density $q$, and assume that for all $u \in \msu$, $\propmap{u}$ is a $\rmc^1$-diffeomorphism.
%Let $f$ be a nonnegative measurable function on $\rset^D$. \Cref{lemma:reparam_markov} follows from the change of variable $z_1= \propmap{u}(z_0)$:
%    \begin{align*}
%      \int_{} f(z_1) q(z_0) Q_{ u}(z_0, \rmd z_1) \rmd z_0 &= \int_{} \left[q(z_0)\alpha^1_u(z_0) f(\propmap{u}(z_0)) + q(z_0) \alpha^0_u(z_0) f(z_0)\right] \rmd z_0
%      \\
%      &= \int_{} \Bigl[ q(\propmap{u}^{-1}(z_1)) \alpha^1_u(\propmap{u}^{-1}(z_1)) \Jac_{\propmap{u}^{-1}}(z_1)
%        + q(z_1) \alpha^0_u(z_1)  \Bigr]f(z_1)\rmd z_1 \eqsp.
%    \end{align*}
%\end{proof}
\section{ELBO AIS}
\label{spsec:ais_elbo}
\subsection{Construction of the control variates}
We prove in this section that the variance reduced objective we consider is valid.
Sample now $n$ samples $\chunk{u}{0}{K}^{1:n}\overset{\textup{i.i.d.}}{\sim} \densinnovationkp$.
For an index $i\in \{1,\dots, n\}$, given the initial point $z_0^i= V_{\phi,x}(u_0^i)$ and the innovation noise $\chunk{u}{1}{K}^i$, we sample the A/R booleans $\chunk{a}{1}{K}^i$.
  %in order to decrease the additional variance due to the $a_{1:K}$,
We introduce, in the main text, for $i\in\{1,\dots, n\}$
  \[
   \tilde W_{n,i} = \frac{1}{n-1}\sum_{j\neq i} W(V_{\phi,x}(u_0^j), \chunk{a^j}{1}{K}, \chunk{u^j}{1}{K})\eqsp,
  \]
  %denoting $ z_k^j = \Circtext{i = 1}{k} \propmap{i, u^j_i}^{a^j_i} (z_0^{j})$, and introducing the notation $ v^{-l} = v^{1:n\setminus \{l\}}$.
  $\tilde W_i$ provides a reasonable estimate of the AIS ELBO but is independent from the $i$-th trajectory.
  We use this quantity as a control variate to reduce the variance of our gradient estimator by introducing
  \begin{align}
  \label{eq:grad_elbo_ais_cv}
  \nonumber
    \widehat{\nabla \elboais}_n &= n^{-1} \sum_{i=1}^n \nabla W(V_{\phi,x}(u_0^i), \chunk{a^i}{1}{K}, \chunk{u^i}{1}{K})
    \\ \nonumber
      &+ n^{-1}\sum_{i=1}^n\left[W(V_{\phi,x}(u_0^i), \chunk{a^i}{1}{K}, \chunk{u^i}{1}{K}) -   \tilde{W}_{n,i} \right]
      \\
      &\times\nabla \log A(V_{\phi,x}(u_0^i), \chunk{a^i}{1}{K}, \chunk{u^i}{1}{K}) \eqsp.
  \end{align}
Proving its unbiasedness boils down to proving that the term
$n^{-1}\sum_{i=1}^n   \tilde{W}_{n,i}\nabla  \log A(V_{\phi,x}(u_0^i), \chunk{a^i}{1}{K}, \chunk{u^i}{1}{K})
$
has expectation zero.
Let us compute for $i\in\{1\dots, n\}$,
\begin{multline*}
   \int_{} \sum_{\chunk{a}{1}{K}^i} \densinnovationkp(\chunk{u}{0}{K}^i)  A(V_{\phi,x}(u_0^i), \chunk{a^i}{1}{K}, \chunk{u^i}{1}{K})\tilde{W}_{n,i}\nabla  \log A(V_{\phi,x}(u_0^i), \chunk{a^i}{1}{K}, \chunk{u^i}{1}{K}) \rmd \chunk{u}{0}{K}^i = \\  \int_{}\sum_{\chunk{a}{1}{K-1}^i}  \densinnovationkp(\chunk{u}{0}{K}^i) \prod_{k=1}^{K}\alpha^{a^i_k}_{k, u^i_k} \bigl(z_{k-1}^i\bigr)\tilde{W}_{n,i}\left[\nabla \sum_{k=1}^{K-1} \log \alpha^{a^i_k}_{k, u^i_k} \bigl(z_{k-1}^i\bigr)+ \sum_{a_K^i}\nabla\log \alpha^{a^i_K}_{K, u^i_K} \bigl(z_{K-1}^i\bigr)\right] \rmd \chunk{u}{0}{K}^i\eqsp,
\end{multline*}
denoting $z_0^j = V_{\phi, x}(u_0^j)$,  $ z_k^j = \Circtext{i = 1}{k} \propmap{i, u^j_i}^{a^j_i} (z_0^{j})$ by simplicity of notation.
Yet, $\sum_{a_K^i} \alpha^{a^i_K}_{K, u^i_K} \bigl(z_{K-1}^i\bigr) = 1$ exactly, thus  $\sum_{a_K^i}\alpha^{a^i_K}_{K, u^i_K} \bigl(z_{K-1}^l\bigr)\nabla\log \alpha^{a^i_K}_{K, u^i_K} \bigl(z_{K-1}^i\bigr)= 0$.
We can thus show by an immediate induction that
$\int_{} \sum_{\chunk{a}{1}{K}^i}  \densinnovation(\chunk{u}{0}{K}^i) \tilde{W}_{n,i}\nabla  \log A(V_{\phi,x}(u_0^i), \chunk{a^i}{1}{K}, \chunk{u^i}{1}{K})\rmd \chunk{u}{i}{K}^i  = 0$, as $\tilde{W}_{n,i}$ is a constant in that integral by independence of the samples for $i\in\{1\dots, n\}$.
Moreover, as
\begin{multline*}
  \int_{} \sum_{\chunk{a}{1}{K}^{1:n}}\sum_{i=1}^n   \tilde{W}_{n,i}\nabla  \log A(V_{\phi,x}(u_0^i), \chunk{a^i}{1}{K}, \chunk{u^i}{1}{K}) \prod_{\ell=1}^n \densinnovationkp(\chunk{u}{0}{K}^\ell)\rmd \chunk{u}{0}{K}^{1:n} = \\\int_{} \sum_{i=1}^n\sum_{\chunk{a}{1}{K}^{-i}}\left[\int\sum_{\chunk{a}{1}{K}^{i}}  \tilde{W}_{n,i} \nabla \log A(V_{\phi,x}(u_0^i), \chunk{a^i}{1}{K}, \chunk{u^i}{1}{K})\densinnovationkp(\chunk{u}{0}{K}^i)\rmd \chunk{u}{0}{K}^{i}\right]\prod_{\ell\neq i} \densinnovationkp(\chunk{u}{0}{K}^\ell)\rmd \chunk{u}{1}{K}^{-i}\eqsp,
\end{multline*}
then $n^{-1}\sum_{i=1}^n   \tilde{W}_{n,i}\nabla  \log A(V_{\phi,x}(u_0^i), \chunk{a^i}{1}{K}, \chunk{u^i}{1}{K})
$ is of zero expectation, and \eqref{eq:grad_elbo_ais_cv} is an unbiased estimator of the gradient.

\subsection{Discussion of \cite{wunoe2020stochastic}}

In \cite{wunoe2020stochastic}, authors consider a MCMC VAE inspired by AIS. The model used however is quite different in spirit to what is performed in this work.
\cite{wunoe2020stochastic} use Langevin mappings and accept reject steps in their VAE.
Note however that the A/R probabilities defined are written as
$$\alpha(x, y) = 1\wedge \pi(y)/\pi(x)\eqsp, $$
different from \eqref{eq:mala_acceptance}.
%Moreover, their objective but no alpha in objectiveweird sum for objectives
Moreover, even though accept/reject steps are considered, the score function estimator \eqref{eq:reinforce_ais} is not taken into account.% as they use the objective corresponding to \eqref{eq:SiS-estimator-Z} of the SIS ELBO.

Finally, the initial density of the sequence is not taken to be some variational mean field initialization but directly the prior in the latent space. As a result, the scores obtained by the MCMC VAE are less competitive than that of the RNVP VAE presented in \citep[Table 3.]{wunoe2020stochastic}, contrary to what is presented here.
\end{document}